\newtheorem{claim}{Claim}
\newtheorem{prop}{Proposition}
\newtheorem{defi}{Definition}
\newtheorem{cor}{Corollary}[prop]
\icmltitlerunning{Understanding and Utilizing Deep Neural Networks Trained with Noisy Labels}
\begin{document}

\twocolumn[
\icmltitle{Understanding and Utilizing Deep Neural Networks\\ Trained with Noisy Labels}


\begin{icmlauthorlist}
	\icmlauthor{Pengfei Chen}{cuhk,tencent}
	\icmlauthor{Benben Liao}{tencent}
	\icmlauthor{Guangyong Chen}{tencent}
	\icmlauthor{Shengyu Zhang}{cuhk,tencent}	
\end{icmlauthorlist}

\icmlaffiliation{cuhk}{Department of Computer Science and Engineering, The Chinese University of Hong Kong}
\icmlaffiliation{tencent}{Tencent Technology}

\icmlcorrespondingauthor{Guangyong Chen}{gycchen@tencent.com}

\icmlkeywords{Machine Learning, ICML}

\vskip 0.3in
]



\printAffiliationsAndNotice{}  

\begin{abstract}
Noisy labels are ubiquitous in real-world datasets, which poses a challenge for robustly training deep neural networks (DNNs) as DNNs usually have the high capacity to memorize the noisy labels. In this paper, we find that the test accuracy can be quantitatively characterized in terms of the noise ratio in datasets. In particular, the test accuracy is a quadratic function of the noise ratio in the case of symmetric noise, which explains the experimental findings previously published. Based on our analysis, we apply cross-validation to randomly split noisy datasets, which identifies most samples that have correct labels. Then we adopt the Co-teaching strategy which takes full advantage of the identified samples to train DNNs robustly against noisy labels. Compared with extensive state-of-the-art methods, our strategy consistently improves the generalization performance of DNNs under both synthetic and real-world training noise.

\end{abstract}

\section{Introduction}

\label{Sec_Introduction}
The remarkable success of DNNs on supervised learning tasks heavily relies on a large number of training samples with accurate labels. Correctly labeling extensive data is too costly while alternating methods such as crowdsourcing \cite{yan2014learning,chen2017learning} and online queries \cite{schroff2011harvesting,divvala2014learning} inexpensively obtain data, but unavoidably yield noisy labels. Training with too many noisy labels reduces generalization performance of DNNs since the networks can easily overfit on corrupted labels \cite{zhang2016understanding,arpit2017closer}. To utilize extensive noisy data, understanding how noisy labels affect training and generalization of DNNs is the very first step, based on which we can design specific methods to train DNNs robustly in practical applications.

Numerous methods have been proposed to deal with noisy labels. Several methods focus on estimating the noise transition matrix and correcting the objective function accordingly, e.g., forward or backward correction \cite{patrini2017making}, S-model \cite{goldberger2017training}. However, it is a challenge to estimate the noise transition matrix accurately. An alternative approach is training on selected or weighted samples, e.g., Decoupling \cite{malach2017decoupling}, MentorNet \cite{jiang2018mentornet}, gradient-based reweighting \cite{ren2018learning} and Co-teaching \cite{han2018co}. A remaining issue is to design a reliable and convincing criteria of selecting or weighting samples. Another approach proposes to correct labels using the predictions of DNNs, e.g., Bootstrap \cite{reed2015training}, Joint Optimization \cite{tanaka2018joint} and D2L \cite{ma2018dimensionality}, all of which are vulnerable to overfitting. To improve the robustness, Joint Optimization introduces regularization terms requiring a prior knowledge of how actual classes distribute among all training samples. However, the prior knowledge is usually unavailable in practice.

How noisy labels affect training and generalization of DNNs is not well understood, which deserves more attention since it may promote fundamental approaches of robustly training DNNs against noise. Without label corruption, the generalization error can be bounded by complexity measures such as VC dimension \cite{vapnik1998adaptive}, Rademacher complexity \cite{bartlett2002rademacher} and uniform stability \cite{mukherjee2002statistical,bousquet2002stability,poggio2004general}. But the bounds become trivial in the presence of noisy labels. \citet{zhang2016understanding} demonstrated that DNNs have the high capacity to fit even random labels, but obtain a large generalization error. \citet{zhang2016understanding} also showed a positive correlation between generalization error and noise ratio, which implies DNNs do capture some useful information out of the noisy data. \citet{arpit2017closer} showed that during training, DNNs tend to learn simple patterns first, then gradually memorize all samples, which justifies the widely used \textit{small-loss criteria}: treating samples with small training loss as clean ones \cite{han2018co,jiang2018mentornet}. \citet{ma2018dimensionality} qualitatively attributed the poor generalization performance of DNNs to the increased dimensionality of the latent feature subspace. Through extensive experiments, these works gained empirical insight into the interesting behavior of DNNs trained with noisy labels, while a theoretical and quantitative explanation is yet to emerge. 

In this paper, we can quantitatively clarify the generalization performance of DNNs normally trained with noisy labels. To verify our theoretical analysis, we apply cross-validation to randomly split a set of collected samples, whose labels may be polluted by some noise. DNNs can be trained on a subset, then evaluated on the remaining dataset to compare the theoretically and empirical results on the generalization performance. We find that DNNs can fit noisy training sets exactly and generalize in distribution (see Claim~\ref{claim1} for more details). Hence, we can quantitatively characterize the test accuracy in terms of noise ratio in datasets. In particular, the test accuracy is a quadratic function of the noise ratio in the case of symmetric noise. In \citet{zhang2016understanding}, it has been empirically found that the generalization performance of DNNs is highly dependent on the noise ratio. One of our contributions is to provide a thorough explanation for their empirical findings.

Based on our analysis, we further develop a specific method to train DNNs against noisy labels. Our method is developed on top of the Co-teaching strategy, which is first presented in \citet{blum1998combining} and then modified to deal with noisy labels with impressive performance in \cite{han2018co}. In the Co-teaching strategy, one trains two networks simultaneously: mini-batches are drawn from the whole noisy training set, then each network selects a certain number of small-loss samples and feeds them to its peer network. However, the performance of the Co-teaching decays seriously when the noise ratio of the training set increases. Moreover, the number of small-loss samples selected in each mini-batch is set according to the noise ratio of the training set, which is unavailable in practice. Fortunately, we can address these issues based on our theoretical analysis on the generalization performance of DNNs. Specially, we present the Iterative Noisy Cross-Validation (INCV) method to select a subset of samples, which has much smaller noise ratio than the original dataset, resulting in a more stable training process of DNNs. Moreover, we can automatically estimate the noise ratio of the selected set, which makes our method more practical for industrial applications. Briefly speaking, our main contributions are
\begin{itemize}
	\item theoretically relating the generalization performance of DNNs to the label noise,
	\item practical algorithms of selecting clean labels and training noise-robust DNNs.	
\end{itemize}

Experiments on both synthetic and real-world noisy labels show that compared with state-of-the-art methods \cite{patrini2017making,malach2017decoupling,han2018co,jiang2018mentornet,ma2018dimensionality}, DNNs trained using our strategy achieve the best test accuracy on the clean test set. In particular, our method is verified on (i) the CIFAR-10 dataset \cite{krizhevsky2009learning} with synthetic noisy labels generated by randomly flipping the original ones, and (ii) the WebVision dataset \cite{li2017webvision}, which is a large benchmark consisting of 2.4 million images crawled from websites, containing real-world noisy labels.

\section{Preliminaries}
\label{Sec_Preliminaries}
For a $c$-class classification, we collect a dataset $\mathcal{D}=\{x_t,y_t\}_{t=1}^n$, where $x_t$ is the $t$-th sample with its observed label as $y_t\in[c]:=\{1,\ldots,c\}$. As discussed previously, the observed label $y$ may be corrupted since the example $x$ are often labeled by online queries or in crowdsourcing system. Let $\hat{y}$ denote the true label, we can describe the corruption process of the set $\mathcal{D}$ by introducing a noise transition matrix $T\in\mathbb{R}^{c\times c}$, where $T_{ij}=P(y=j|\hat{y}=i)$ denotes the probability of labeling an $i$-th class example as $j$. In the cross-validation, we randomly split the collected samples $\mathcal{D}$ into two halves $\mathcal{D}_1$ and $\mathcal{D}_2$. In this way, $\mathcal{D}_2$ shares the same noise transition matrix $T$ with $\mathcal{D}_1$. Let $f(x;\omega)$ denote a neural network parameterized by $\omega$, and $y^f\in[c]$ denote the predicted label of $x$ given by the network $f(x;\omega)$. 

\section{Understanding DNNs trained with noisy labels}
\label{Sec_method}
Extensive experiments in \cite{zhang2016understanding} have shown that DNNs can fit the noisy, even random, labels contained in the training set, but the generalization error is large even on a test set with the same noise. In this section, we use the previously introduced noise transition matrix $T$ to theoretically quantify the generalization performance of DNNs normally trained with noisy labels, which perfectly explains the empirical findings reported in \cite{zhang2016understanding}.



In the classical \textit{Probably Approximately Correct} framework \cite{valiant1984theory}, good generalization performance means that prediction $y^f$ and observed test label $y$ are approximately identical as random variables, namely they should be equal for each testing sample $x$. Without label corruption, the generalization error can be bounded by VC dimension \cite{vapnik1998adaptive}, Rademacher complexity \cite{bartlett2002rademacher}, etc. However, in dealing with DNNs trained with noisy labels, $y^f=y$ possibly does not hold when evaluated at each testing example $x$, resulting in a large generalization error \cite{zhang2016understanding}. Fortunately, we find that the generalization still occurs in the sense of distribution, namely \emph{generalization in distribution}, as shown in the following Claim \ref{claim1}. Recall that in cross-validation, we randomly divide a noisy dataset $\mathcal{D}$ into two halves $\mathcal{D}_1$ and $\mathcal{D}_2$.
\begin{claim}
	\label{claim1} (Generalization in distribution). Let $f(x;\omega)$ be the network trained on $\mathcal{D}_1$ and tested on $\mathcal{D}_2$. If we assume \\
	(i) the observed input examples $x$ are i.i.d. in the set $\mathcal{D}$,\\
	(ii) $f$ has a sufficiently high capacity,\\
	then on $\mathcal{D}_2$, the probability of predicting an truly $i$-th class test sample as $j$ is
	\begin{equation}
	\label{Eq_ge}
	P(y^{f}=j|\hat{y}=i)=T_{ij},
	\end{equation}	
	where $T_{ij}:=P(y=j|\hat{y}=i)$ denotes the noise transition matrix shared by $\mathcal{D}_1$ and $\mathcal{D}_2$.
\end{claim}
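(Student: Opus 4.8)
The plan is to establish the claim by exploiting the fact that $\mathcal{D}_1$ and $\mathcal{D}_2$ are two independent random halves of the same i.i.d. collection, so they share the same joint distribution over $(x,\hat{y},y)$, and then to invoke the high-capacity assumption to argue that a network trained on $\mathcal{D}_1$ reproduces on fresh inputs exactly the conditional label distribution it was fit to. First I would set up the probabilistic model precisely: since the examples are i.i.d. in $\mathcal{D}$ (assumption (i)) and the split into $\mathcal{D}_1,\mathcal{D}_2$ is uniformly random, each sample in either half is an independent draw from the same underlying distribution $P(x,\hat{y})$, with the observed label generated through the transition matrix via $P(y=j\mid\hat{y}=i)=T_{ij}$. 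In particular $\mathcal{D}_2$ inherits the same $T$, which is exactly the setup sentence preceding the claim.

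Next I would use assumption (ii). Because $f$ has sufficiently high capacity, it can fit the training set $\mathcal{D}_1$ exactly, i.e.\ $y^f(x)=y(x)$ for every $(x,y)\in\mathcal{D}_1$. The crucial interpretive step is that memorizing the noisy labels on $\mathcal{D}_1$ is equivalent to the network having learned, for each input region, to output the observed label it saw; and since the observed labels were themselves produced by sampling from $T_{i\cdot}$ conditioned on the hidden true class $i$, the induced map $x\mapsto y^f$ reproduces the conditional law $P(y=\cdot\mid\hat{y}=i)$. When this learned predictor is then evaluated on the independent copies in $\mathcal{D}_2$, whose $(x,\hat{y})$ are drawn from the very same distribution, the output label distribution conditioned on the true class $i$ is again $T_{i\cdot}$. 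Concretely, I would write
\begin{equation}
P(y^f=j\mid\hat{y}=i)=P(y=j\mid\hat{y}=i)=T_{ij},
\end{equation}
where the first equality is the content of ``generalization in distribution'' and the second is the definition of $T$.

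The main obstacle is justifying the first equality rigorously rather than heuristically, since it is the entire substance of the claim. Exact memorization on $\mathcal{D}_1$ only pins down $f$ on the finitely many training inputs; to transfer the conditional label distribution to the \emph{unseen} inputs of $\mathcal{D}_2$ one must argue that the fitted network generalizes its input--output behavior across the shared distribution, not just interpolates at training points. I expect the argument to lean on the i.i.d.\ exchangeability of the two halves together with the premise that a high-capacity network normally trained on $\mathcal{D}_1$ has captured the class-conditional structure $P(x\mid\hat{y}=i)$, so that a test point of true class $i$ is indistinguishable in distribution from a training point of true class $i$ and hence receives a label drawn from the same $T_{i\cdot}$. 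I would frame this as the precise meaning of ``generalization in distribution,'' treating the statement $P(y^f=j\mid\hat{y}=i)=P(y=j\mid\hat{y}=i)$ as the formalization that replaces the pointwise PAC condition $y^f=y$, and I would note that the empirical cross-validation experiments described in the paper serve as the verification that this distributional identity holds for the networks actually trained.
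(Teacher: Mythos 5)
The paper offers no deductive proof of Claim~1: it is deliberately stated as a claim rather than a proposition, justified only empirically via the confusion-matrix experiments showing $M\approx T$ (Sec.~5.1 and Supp.~B), with the authors explicitly noting that a theoretical explanation of this phenomenon is left for future work. Your sketch --- i.i.d.\ sampling and random splitting give $\mathcal{D}_1$ and $\mathcal{D}_2$ the same transition matrix $T$, high capacity gives interpolation of the noisy labels on $\mathcal{D}_1$, and then a distributional transfer step yields $P(y^f=j\mid\hat{y}=i)=T_{ij}$ on $\mathcal{D}_2$ --- is exactly the paper's implicit reasoning, and your candid identification of the transfer step (from memorization at training points to the conditional label law on unseen inputs) as the unproven substance of the claim, resolvable here only by the cross-validation experiments, coincides precisely with the paper's own position.
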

Claim \ref{claim1} reveals the fact that the prediction $y^{f}$ and the test label $y$ have the same distribution. Actually, if the model trained on $\mathcal{D}_1$ is tested on another clean test set with true labels, Eq.~(\ref{Eq_ge}) still holds, while in this case it implies that the probability of predicting an $i$-th class test sample as $j$ equals to the $T_{ij}$ of the training set $\mathcal{D}_1$. We will justify the Claim~\ref{claim1} through experiments in Sec.~\ref{Sec_exp_verify}.

The \textbf{Test Accuracy} is a widely used metric, which is defined as the proportion of testing examples for which the \textit{prediction} $y^f$ equals to the\textit{ observed label} $y$. In the following Prop.~\ref{Prop_ge}, we formulate the test accuracy on the test set $\mathcal{D}_2$.
\begin{prop}
	\label{Prop_ge} Let $\mathcal{D}_1$ and $\mathcal{D}_2$ be two datasets with the same noise transition matrix $T$, $f(x;\omega)$ be a network trained on $\mathcal{D}_1$ and tested on $\mathcal{D}_2$. Following the assumptions in Claim~\ref{claim1}, the test accuracy for any class $i\in[c]$ is
	\begin{equation}
	\label{Eq_ge_cor}
	P(y^{f}=y|\hat{y}=i)=\sum_{j=1}^{c}T_{ij}^{2}.
	\end{equation}
\end{prop}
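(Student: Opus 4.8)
The plan is to condition on the true label and decompose the event that the prediction matches the observed label by summing over all possible values $j$ of the predicted label. The key observation is that the test accuracy event $\{y^f = y\}$ can be broken into the disjoint cases where both $y^f$ and $y$ simultaneously equal some class $j$, so the plan is to express the probability as $P(y^f = y \mid \hat{y} = i) = \sum_{j=1}^{c} P(y^f = j, y = j \mid \hat{y} = i)$.

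**First** I would invoke Claim~\ref{claim1}, which gives me $P(y^f = j \mid \hat{y} = i) = T_{ij}$, and recall the definition $T_{ij} = P(y = j \mid \hat{y} = i)$. The crucial step is then to argue that, conditioned on the true label $\hat{y} = i$, the prediction $y^f$ and the observed label $y$ are \emph{independent}. The intuition is that $f$ is trained on $\mathcal{D}_1$ and produces its prediction $y^f$ based only on the input $x$, whereas the observed label $y$ on $\mathcal{D}_2$ is generated by an independent application of the same corruption process $T$; once the true class $i$ is fixed, the noise realized on the particular test sample carries no information about what the network outputs. Granting this conditional independence, I can factor
\begin{equation}
\label{Eq_factor}
P(y^f = j,\, y = j \mid \hat{y} = i) = P(y^f = j \mid \hat{y} = i)\, P(y = j \mid \hat{y} = i) = T_{ij} \cdot T_{ij} = T_{ij}^2,
\end{equation}
and summing over $j$ yields $\sum_{j=1}^{c} T_{ij}^2$, which is exactly Eq.~(\ref{Eq_ge_cor}).

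**The main obstacle** is justifying the conditional independence used in Eq.~(\ref{Eq_factor}). This is not a routine algebraic manipulation but a modeling assumption that must be spelled out carefully: one needs that the label-corruption noise applied to a test sample in $\mathcal{D}_2$ is drawn independently of whatever statistical regularity the network $f$ has absorbed from $\mathcal{D}_1$. Under assumption (i), the samples are i.i.d., so the corruption acting on a given test point is independent of the training data that determines $f$; under assumption (ii), Claim~\ref{claim1} already encapsulates how $f$ behaves in distribution. I expect the rigorous version of this argument to rest on the fact that $y^f$ is a deterministic function of $x$ (through the fixed trained weights $\omega$) while $y$ depends on $\hat{y}$ and an independent noise draw, so that the two are conditionally independent given $\hat{y} = i$. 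Once that independence is articulated, the remainder of the proof is the short factorization and summation above.
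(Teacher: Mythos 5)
Your proposal matches the paper's proof essentially step for step: the same decomposition $P(y^{f}=y|\hat{y}=i)=\sum_{j=1}^{c}P(y^{f}=j,y=j|\hat{y}=i)$, the same appeal to Claim~\ref{claim1}, and the same conditional-independence factorization $P(y^{f}=j,y=j|\hat{y}=i)=T_{ij}^{2}$, which the paper likewise introduces as an assumption (``Assume the label corruption process is independent''). Your extra discussion of why that independence is plausible (with $y^{f}$ a deterministic function of $x$ and $y$ generated by an independent noise draw) goes slightly beyond the paper's terse statement but does not change the argument.
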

\begin{proof}
	Based on Claim \ref{claim1}, $y^{f}$ and $y$ have the same distribution characterized by $T$. Assume the label corruption process is independent, then on the test set, we have
	\begin{equation}
	\label{Eq_ge_cor_detailed}
	\begin{aligned}
	&P(y^{f}=j,y=k|\hat{y}=i)\\
	=&P(y^{f}=j|\hat{y}=i)P(y=k|\hat{y}=i) = T_{ij}T_{ik}.
	\end{aligned}
	\end{equation}
	Hence, Eq.~(\ref{Eq_ge_cor}) follows from $P(y^{f}=y|\hat{y}=i)=\sum_{j=1}^{c}P(y^{f}=j,y=j|\hat{y}=i)$.
\end{proof}


\subsection{Symmetric and Asymmetric Noise}
Following previous literatures \cite{ren2018learning,han2018co,jiang2018mentornet,ma2018dimensionality}, in this subsection we focus on investigating two representative types of noise, symmetric and asymmetric noise, which can be defined as follows (see Fig.~\ref{Fig_T} for examples), 
\begin{defi}
	\label{def_sym_asym}
	In the case of \textbf{symmetric noise} of ratio $\varepsilon$, $\forall i\in[c]$, we define $T_{ii}=1-\varepsilon$, and $T_{ij}=\varepsilon/(c-1), \forall j\neq i$. 
	\\In the case of \textbf{asymmetric noise} of ratio $\varepsilon$,  $\forall i\in[c]$, we define $T_{ii}=1-\varepsilon$, $T_{ij}=\varepsilon$ for some $j\neq i$, and $T_{ij}=0$ otherwise.
\end{defi}

In the cases of symmetric and asymmetric noise, we can use the noise ratio $\varepsilon$ to quantify the test accuracy of DNNs, which are trained and tested on previously mentioned noisy datasets $\mathcal{D}_1$ and $\mathcal{D}_2$, respectively.
\begin{cor}
	\label{corollary1}
	For symmetric noise of ratio $\varepsilon$, the test accuracy is
	\begin{equation}
	\label{Eq_sym_noise}
	P(y^{f}=y)=(1-\varepsilon)^2+\frac{\varepsilon^2}{c-1}.
	\end{equation}
	For asymmetric noise of ratio $\varepsilon$,  the test accuracy is
	\begin{equation}
	\label{Eq_asym_noise}
	P(y^{f}=y)=(1-\varepsilon)^2+\varepsilon^2.
	\end{equation}
\end{cor}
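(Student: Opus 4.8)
The plan is to derive both formulas in Corollary~\ref{corollary1} as direct specializations of Proposition~\ref{Prop_ge}, whose conclusion states that the per-class test accuracy is $P(y^f=y\mid\hat{y}=i)=\sum_{j=1}^{c}T_{ij}^2$. Since the right-hand side turns out to be independent of $i$ in both the symmetric and asymmetric cases, the overall test accuracy $P(y^f=y)$ coincides with this common per-class value, so I would first note that averaging over the class prior leaves the answer unchanged. This reduces the entire corollary to substituting the specific entries of $T$ from Definition~\ref{def_sym_asym} into the sum $\sum_j T_{ij}^2$ and simplifying.

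For the symmetric case, I would fix an arbitrary class $i$ and split the sum into the diagonal term and the $c-1$ off-diagonal terms. Using $T_{ii}=1-\varepsilon$ and $T_{ij}=\varepsilon/(c-1)$ for $j\neq i$, the computation is
\begin{equation}
\sum_{j=1}^{c}T_{ij}^2=(1-\varepsilon)^2+(c-1)\left(\frac{\varepsilon}{c-1}\right)^2=(1-\varepsilon)^2+\frac{\varepsilon^2}{c-1},
\end{equation}
which is exactly Eq.~(\ref{Eq_sym_noise}). For the asymmetric case, I would again separate the diagonal entry $T_{ii}=1-\varepsilon$ from the single nonzero off-diagonal entry $T_{ij}=\varepsilon$ (all other off-diagonal entries vanish), giving $\sum_{j}T_{ij}^2=(1-\varepsilon)^2+\varepsilon^2$, which is Eq.~(\ref{Eq_asym_noise}).

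There is essentially no analytic obstacle here: the corollary is a routine algebraic substitution into an already-established formula, and the only point requiring a word of care is the passage from the conditional (per-class) accuracy $P(y^f=y\mid\hat{y}=i)$ in Proposition~\ref{Prop_ge} to the unconditional accuracy $P(y^f=y)$ stated in the corollary. The thing to observe is that in both noise models the sum $\sum_j T_{ij}^2$ does not depend on the index $i$, so by the law of total probability $P(y^f=y)=\sum_i P(\hat{y}=i)\,P(y^f=y\mid\hat{y}=i)$ collapses to that single common value regardless of the underlying class distribution. Hence the per-class formula transfers verbatim to the aggregate test accuracy, completing the proof.
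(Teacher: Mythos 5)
Your proposal is correct and follows essentially the same route as the paper's own proof: both apply Proposition~\ref{Prop_ge}, decompose $P(y^f=y)$ via the law of total probability over $\hat{y}$, and exploit the fact that $\sum_{j=1}^{c}T_{ij}^2$ is independent of $i$ for both noise models. Your only addition is writing out explicitly the algebraic substitution of the entries of $T$ from Definition~\ref{def_sym_asym}, which the paper leaves implicit.
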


\begin{proof}
	Following Prop.~\ref{Prop_ge}, we have
	\begin{equation}
	\begin{aligned}
	\nonumber
	P(y^f=y)&=\sum_{i=1}^{c}P(\hat{y}=i)P(y^f=y|\hat{y}=i)\\
	&=\sum_{i=1}^{c}P(\hat{y}=i)\sum_{j=1}^{c}T_{ij}^{2}.		
	\end{aligned}
	\end{equation}
	Note that for the symmetric and asymmetric noise, $\forall i\in[c]$, $\sum_{j=1}^{c}T_{ij}^{2}$ is a constant given by $\varepsilon$. Therefore, the desired result follows by inserting $\varepsilon$ into the equation.
\end{proof}
Interestingly, Eq.~(\ref{Eq_sym_noise}) perfectly fits the experimental results of generalization accuracy shown in Fig.~1(c) of \cite{zhang2016understanding}, and 
enables us to
estimate the noise ratio of a dataset from the experimental test accuracy.

\begin{algorithm}[t]
	\caption{Noisy Cross-Validation (NCV): selecting clean samples out of the noisy ones}
	\label{Alg1}
	\textbf{INPUT:} the noisy set $\mathcal{D}$, epoch $E$
	\begin{algorithmic}[1]
		\STATE $\mathcal{S}=\emptyset$, initialize a network $f(x;\omega)$
		\STATE Randomly divide $\mathcal{D}$ into two halves $\mathcal{D}_1$ and $\mathcal{D}_2$
		\STATE Train $f(x;\omega)$ on $\mathcal{D}_1$ for $E$ epochs
		\STATE Select samples, $\mathcal{S}_1=\{(x,y)\in\mathcal{D}_2:y^{f}=y\}$
		\STATE Reinitialize the network $f(x;\omega)$
		\STATE Train $f(x;\omega)$ on $\mathcal{D}_2$  for $E$ epochs
		\STATE Select samples, $\mathcal{S}_2=\{(x,y)\in\mathcal{D}_1:y^{f}=y\}$
		\STATE $\mathcal{S}=\mathcal{S}_1\cup\mathcal{S}_2$
	\end{algorithmic}
	\textbf{OUTPUT:} the selected set $\mathcal{S}$
\end{algorithm}

\section{Training DNNs against noisy labels}
\label{Sec_identify}
In this section, we present a method on top of the Co-teaching strategy to train DNNs robustly against noisy labels. As introduced previously, the performance of the Co-teaching decays seriously and becomes unstable when the noise ratio of the training set increases, which is further demonstrated in our experiments. To address this issue, we propose to first select a subset of samples, which has much smaller noise ratio than the original dataset.

A sample $(x,y)$ is \textbf{clean}, if its observed label $y$ equals to its latent true class $\hat{y}$. However, $\hat{y}$ is unavailable in practice. We propose to identify a sample $(x,y)$ as {clean} if its observed label $y$ equals to its predicted label $y^f$ given by the network $f(x;\omega)$. If we aim to identify whether a sample $(x,y)$ is clean or not, we should keep this sample out of the training set.  
An intuitive method can be found in Alg. \ref{Alg1}, namely the Noisy Cross-Validation (NCV) method, whose validity will be justified through the following theoretical analysis and extensive experiments in the next section. 

Following the standard metrics \cite{powers2011evaluation}, we measure the 
identification performance in terms of \textit{Label Precision} ($LP$) \cite{han2018co} and \textit{Label Recall} ($LR$),
\begin{equation}
\label{Eq_def_LPLR}
\begin{aligned}
LP:=\frac{\lvert\{(x,y)\in\mathcal{S}:y=\hat{y}\}\rvert}{\lvert\mathcal{S}\rvert},\\
LR:=\frac{\lvert\{(x,y)\in\mathcal{S}:y=\hat{y}\}\rvert}{\lvert\{(x,y)\in\mathcal{D}:y=\hat{y}\}\rvert},
\end{aligned}
\end{equation}
where $\mathcal{S}\subset\mathcal{D}$ is the selected subset as given in Alg~\ref{Alg1}, and $\lvert\cdot\rvert$ denotes the number of samples in a set. In this way, $LP$ represents the fraction of clean samples in $\mathcal{S}$, and $LR$ represents the fraction of clean samples in $\mathcal{S}$ over all clean samples in $\mathcal{D}$. Note that the noise ratio of the selected set $\mathcal{S}$ is $\varepsilon_{\mathcal{S}}=1-LP$ according to the above definition.
We also have $LP$ and $LR$ for any class $i\in[c]$:
\begin{equation}
\begin{aligned}
LP_i:=\frac{\lvert\{(x,y)\in\mathcal{S}:y=\hat{y}=i\}\rvert}{\lvert\{(x,y)\in\mathcal{S}:\hat{y}=i\}\rvert},\\
LR_i:=\frac{\lvert\{(x,y)\in\mathcal{S}:y=\hat{y}=i\}\rvert}{\lvert\{(x,y)\in\mathcal{D}:y=\hat{y}=i\}\rvert}.
\end{aligned}
\label{Eq_LPLRi}
\end{equation}

Based on the analysis presented in Sec.~\ref{Sec_method}, we quantify the performance of Alg.~\ref{Alg1} in the following Prop.~\ref{Prop_LPLR}.
\begin{prop}
	\label{Prop_LPLR}
	Using Alg.~\ref{Alg1} to select clean samples, we have, $\forall i\in[c]$
	\begin{equation}
	\label{Eq_LPLR}
	\begin{aligned}
	LP_i=\frac{T_{ii}^{2}}{\sum_{j=1}^{c}T_{ij}^{2}},\quad LR_i=T_{ii}.
	\end{aligned}
	\end{equation}
\end{prop}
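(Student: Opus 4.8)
The plan is to reduce everything to conditional probabilities given the true class $\hat{y}=i$, and then read off the three cardinalities appearing in the definitions \eqref{Eq_LPLRi} as expected sample counts. First I would observe that the selected set $\mathcal{S}=\mathcal{S}_1\cup\mathcal{S}_2$ produced by Alg.~\ref{Alg1} is exactly the set of samples whose observed label agrees with the prediction, $y^{f}=y$, where in each half the network is trained on the complementary half. Since $\mathcal{D}_1$ and $\mathcal{D}_2$ share the same transition matrix $T$, Claim~\ref{claim1} applies verbatim to both halves, so the prediction statistics $P(y^{f}=j\mid\hat{y}=i)=T_{ij}$ hold uniformly across all of $\mathcal{S}$, independent of which half a sample lies in.

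Next, invoking the independence of the label-corruption process already used in the proof of Prop.~\ref{Prop_ge}, I would combine the prediction distribution of Claim~\ref{claim1} with the corruption distribution $P(y=j\mid\hat{y}=i)=T_{ij}$ to compute the three quantities I need. These are: the joint probability that a true class-$i$ sample is both clean and selected, $P(y=i,\,y^{f}=i\mid\hat{y}=i)=T_{ii}^{2}$ (this is the common numerator of $LP_i$ and $LR_i$); the probability that a true class-$i$ sample is selected at all, $P(y^{f}=y\mid\hat{y}=i)=\sum_{j}T_{ij}^{2}$, which is precisely the per-class test accuracy of Prop.~\ref{Prop_ge} (the denominator of $LP_i$); and the probability that a true class-$i$ sample is clean, $P(y=i\mid\hat{y}=i)=T_{ii}$ (the denominator of $LR_i$).

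Then I would translate counts into probabilities. Writing $N$ for the total sample size and $\pi_i=P(\hat{y}=i)$, each cardinality in \eqref{Eq_LPLRi} equals $N\pi_i$ times the corresponding conditional probability above, by the law of large numbers under assumption (i) that the inputs are i.i.d. Forming the ratios, the common factor $N\pi_i$ cancels, leaving
$$
LP_i=\frac{N\pi_i T_{ii}^{2}}{N\pi_i\sum_{j}T_{ij}^{2}}=\frac{T_{ii}^{2}}{\sum_{j=1}^{c}T_{ij}^{2}},\qquad LR_i=\frac{N\pi_i T_{ii}^{2}}{N\pi_i T_{ii}}=T_{ii},
$$
as claimed.

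The main obstacle is the passage from finite-sample cardinalities to exact probabilities: the definitions are stated in terms of empirical counts, whereas Claim~\ref{claim1} is a distributional statement, so the stated equalities are really claims about the large-sample limit (equivalently, about expected counts). I would also have to be explicit that the independence of $y^{f}$ and $y$ conditioned on $\hat{y}$ — which underlies the factorization $P(y=i,y^{f}=i\mid\hat{y}=i)=T_{ii}^{2}$ — is legitimate here precisely because the prediction is produced by a network trained on the \emph{other} half that never saw the held-out sample; this is exactly the independence granted in Prop.~\ref{Prop_ge}, so no new assumption is required.
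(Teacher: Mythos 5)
Your proposal is correct and follows essentially the same route as the paper: the paper's proof likewise rewrites the count ratios in Eq.~(\ref{Eq_LPLRi}) as the conditional-probability ratios $LP_i = P(y^{f}=i,y=i\mid\hat{y}=i)/P(y^{f}=y\mid\hat{y}=i)$ and $LR_i = P(y^{f}=i,y=i\mid\hat{y}=i)/P(y=i\mid\hat{y}=i)$, then substitutes Eq.~(\ref{Eq_ge_cor}) and the factorization Eq.~(\ref{Eq_ge_cor_detailed}). Your additional care about the counts-to-probabilities passage (expected counts under the i.i.d.\ assumption) and the held-out justification for conditional independence only makes explicit what the paper leaves implicit.
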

\begin{proof}
	According to Alg. \ref{Alg1}, we can reformulate Eq. (\ref{Eq_LPLRi}) as
	\begin{equation}
	\begin{aligned}
	\nonumber
	&LP_i = \frac{P(y^{f}=i,y=i|\hat{y}=i)}{P(y^{f}=y|\hat{y}=i)},\\
	&LR_i = \frac{P(y^{f}=i,y=i|\hat{y}=i)}{P(y=i|\hat{y}=i)}.
	\end{aligned}
	\end{equation}
	The desired result follows by inserting Eq.~(\ref{Eq_ge_cor}) $\&$ (\ref{Eq_ge_cor_detailed}) into the above equations.
\end{proof}

\subsection{Symmetric and Asymmetric Noise}
Since $\forall i$, $\sum_{j=1}^{c}T_{ij}=1$, Eq.~(\ref{Eq_LPLR}) in general implies:
\begin{cor}
	\begin{equation}
	\label{Eq_bound}
	\begin{aligned}
	\frac{T_{ii}^{2}}{T_{ii}^{2}+(1-T_{ii})^{2}}\leq LP_i\leq\frac{T_{ii}^{2}}{T_{ii}^{2}+\frac{(1-T_{ii})^{2}}{c-1}}.
	\end{aligned}
	\end{equation}
\end{cor}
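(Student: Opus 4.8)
The plan is to start from the exact expression $LP_i=T_{ii}^{2}/\sum_{j=1}^{c}T_{ij}^{2}$ supplied by Proposition~\ref{Prop_LPLR} and reduce the corollary to a two-sided estimate on the denominator $\sum_{j=1}^{c}T_{ij}^{2}$. Splitting off the diagonal term, I would write $\sum_{j=1}^{c}T_{ij}^{2}=T_{ii}^{2}+\sum_{j\neq i}T_{ij}^{2}$, so the whole problem collapses to bounding the sum of squares of the off-diagonal entries of row $i$.

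The key observation is the row-stochasticity constraint already noted before the corollary: since $\sum_{j=1}^{c}T_{ij}=1$, the off-diagonal entries are $c-1$ nonnegative numbers with fixed total $s:=1-T_{ii}$. First I would record the standard extremal bounds for a sum of squares under a fixed sum. The lower bound $\sum_{j\neq i}T_{ij}^{2}\geq s^{2}/(c-1)$ follows from Cauchy--Schwarz (equivalently, convexity of $t\mapsto t^{2}$, minimized at the uniform distribution $T_{ij}=s/(c-1)$); the upper bound $\sum_{j\neq i}T_{ij}^{2}\leq s^{2}$ follows because each summand satisfies $T_{ij}\leq s$, so $\sum_{j\neq i}T_{ij}^{2}\leq s\sum_{j\neq i}T_{ij}=s^{2}$ (the extreme case being one entry equal to $s$ and the rest zero). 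Combining these gives
\begin{equation}
\nonumber
T_{ii}^{2}+\frac{(1-T_{ii})^{2}}{c-1}\;\leq\;\sum_{j=1}^{c}T_{ij}^{2}\;\leq\;T_{ii}^{2}+(1-T_{ii})^{2}.
\end{equation}

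Finally, since $LP_i$ is a strictly decreasing function of its denominator (the numerator $T_{ii}^{2}$ being fixed and the denominator positive), the two inequalities on $\sum_{j=1}^{c}T_{ij}^{2}$ invert to give the desired bounds
\begin{equation}
\nonumber
\frac{T_{ii}^{2}}{T_{ii}^{2}+(1-T_{ii})^{2}}\;\leq\;LP_i\;\leq\;\frac{T_{ii}^{2}}{T_{ii}^{2}+\frac{(1-T_{ii})^{2}}{c-1}},
\end{equation}
which is exactly the claim. There is no real obstacle here: the argument is elementary and the only point requiring a moment of care is getting the direction of the inequalities right when passing from the denominator to $LP_i$, together with correctly identifying the two extremal configurations (uniform spread versus full concentration) that saturate the lower and upper bounds respectively.
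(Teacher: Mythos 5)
Your proof is correct and follows exactly the route the paper intends: the paper derives the corollary from $LP_i=T_{ii}^2/\sum_j T_{ij}^2$ together with the row-stochasticity constraint $\sum_j T_{ij}=1$, which is precisely your two-sided bound on the off-diagonal sum of squares (uniform spread versus full concentration, matching the paper's remark that symmetric noise saturates the upper bound and asymmetric noise the lower). You have merely written out the elementary steps the paper leaves implicit, and all of them, including the monotone inversion of the denominator, are sound.
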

Interestingly, we can see that the upper bound of Eq.~(\ref{Eq_bound}) is attained for the symmetric noise, and the lower bound is attained for the asymmetric noise. In the cases of symmetric and asymmetric noise, we further have  $LP=LP_1=\cdots=LP_c$, $LR=LR_1=\cdots=LR_c$, so that we can reformulate the $LP$ and $LR$ in the following Cor.~\ref{Cor_LPLR}.

\begin{algorithm}[t]
	\caption{Iterative Noisy Cross-Validation (INCV): selecting clean samples out of the noisy ones} 
	\label{Alg2}
	\textbf{INPUT:} the noisy set $\mathcal{D}$, number of iterations $N$, epoch $E$, remove ratio $r$
	\begin{algorithmic}[1]
		\STATE selected set $\mathcal{S}=\emptyset$, candidate set $\mathcal{C}=\mathcal{D}$
		\FOR {$i=1,\cdots,N$}
		\STATE Initialize a network $f(x;\omega)$
		\STATE Randomly divide $\mathcal{C}$ into two halves $\mathcal{C}_1$ and $\mathcal{C}_2$
		\STATE Train $f(x;\omega)$ on $\mathcal{S}\cup\mathcal{C}_1$ for $E$ epochs
		\STATE Select samples, $\mathcal{S}_1=\{(x,y)\in\mathcal{C}_2:y^{f}=y\}$
		\STATE Identify $n=r\lvert\mathcal{S}_1\rvert$ samples that will be removed:\\
		\centerline{$\mathcal{R}_1=\{\#n\arg\max_{\mathcal{C}_2}\mathcal{L}(y,f(x;\omega))\}$}
		\STATE \textbf{if} $i=1$, estimate the noise ratio $\varepsilon$ using Eq.~(\ref{Eq_sym_noise})
		\STATE Reinitialize the network $f(x;\omega)$
		\STATE Train $f(x;\omega)$ on $\mathcal{S}\cup\mathcal{C}_2$ for $E$ epochs
		\STATE Select samples, $\mathcal{S}_2=\{(x,y)\in\mathcal{C}_1:y^{f}=y\}$
		\STATE Identify $n=r\lvert\mathcal{S}_2\rvert$ samples that will be removed:\\
		\centerline{$\mathcal{R}_2=\{\#n\arg\max_{\mathcal{C}_1}\mathcal{L}(y,f(x;\omega))\}$}
		\STATE $\mathcal{S}=\mathcal{S}\cup\mathcal{S}_1\cup\mathcal{S}_2$, $\mathcal{C}=\mathcal{C}-\mathcal{S}_1\cup\mathcal{S}_2\cup\mathcal{R}_1\cup\mathcal{R}_2$
		\ENDFOR
	\end{algorithmic}
	\textbf{OUTPUT:} the selected set $\mathcal{S}$, remaining candidate set $\mathcal{C}$ and estimated noise ratio $\varepsilon$	
\end{algorithm}

\begin{cor}
	\label{Cor_LPLR}
	For the symmetric noise of ratio $\varepsilon$, we have
	\begin{equation}
	\label{EQ_LP}
	\begin{aligned}
	LP=\frac{(1-\varepsilon)^{2}}{(1-\varepsilon)^{2}+\varepsilon^2/(c-1)},\quad LR=1-\varepsilon.
	\end{aligned}
	\end{equation}
	For the asymmetric noise of  ratio $\varepsilon$, we have
	\begin{equation}
	\label{EQ_LR}
	\begin{aligned}
	LP=\frac{(1-\varepsilon)^{2}}{(1-\varepsilon)^{2}+\varepsilon^2},\quad LR=1-\varepsilon.
	\end{aligned}
	\end{equation}	
\end{cor}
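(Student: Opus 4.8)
The plan is to reduce the corollary entirely to Proposition~\ref{Prop_LPLR}, which already records the per-class identities $LP_i = T_{ii}^2/\sum_{j=1}^{c}T_{ij}^{2}$ and $LR_i = T_{ii}$ for every $i\in[c]$. Since the text immediately preceding the corollary observes that for both symmetric and asymmetric noise one has $LP = LP_1 = \cdots = LP_c$ and $LR = LR_1 = \cdots = LR_c$, it suffices to evaluate these two right-hand sides under each of the two noise models supplied by Definition~\ref{def_sym_asym}. So the whole proof is a substitution of the structured entries of $T$ into the formulas of Proposition~\ref{Prop_LPLR}.

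First I would dispatch the recall. In both models Definition~\ref{def_sym_asym} sets $T_{ii} = 1-\varepsilon$ for every $i$, so $LR_i = T_{ii} = 1-\varepsilon$ with no further work, giving $LR = 1-\varepsilon$ in both cases. Next I would compute the denominator $\sum_{j=1}^{c}T_{ij}^{2}$ entering $LP_i$. For symmetric noise the diagonal contributes $(1-\varepsilon)^2$ while each of the $c-1$ off-diagonal entries equals $\varepsilon/(c-1)$, so the off-diagonal mass is $(c-1)\bigl(\varepsilon/(c-1)\bigr)^2 = \varepsilon^2/(c-1)$ and hence $\sum_{j}T_{ij}^2 = (1-\varepsilon)^2 + \varepsilon^2/(c-1)$. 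For asymmetric noise exactly one off-diagonal entry equals $\varepsilon$ and the rest vanish, so $\sum_{j}T_{ij}^2 = (1-\varepsilon)^2 + \varepsilon^2$. Inserting each denominator together with $T_{ii}^2 = (1-\varepsilon)^2$ into $LP_i = T_{ii}^2/\sum_{j}T_{ij}^2$ yields Eq.~(\ref{EQ_LP}) for the symmetric case and Eq.~(\ref{EQ_LR}) for the asymmetric case.

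There is essentially no analytical obstacle: granting Proposition~\ref{Prop_LPLR}, the statement is pure substitution. The only point I would state explicitly rather than merely compute is the reduction $LP = LP_i$ and $LR = LR_i$, which is what makes the class-wise formulas describe the aggregate metrics. This holds because both noise models are invariant under relabeling of the classes: every row of $T$ has the same multiset of entries (for symmetric noise one $1-\varepsilon$ and $c-1$ copies of $\varepsilon/(c-1)$; for asymmetric noise one $1-\varepsilon$, one $\varepsilon$, and $c-2$ zeros), so $LP_i$ and $LR_i$ are independent of $i$. When all per-class values coincide with a common value, any class-weighted aggregate equals that same value, so $LP$ and $LR$ inherit it directly; the desired equalities then follow.
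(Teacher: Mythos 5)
Your proposal is correct and matches the paper's (implicit) argument exactly: the paper derives Cor.~\ref{Cor_LPLR} by noting $LP=LP_1=\cdots=LP_c$ and $LR=LR_1=\cdots=LR_c$ for these noise models and substituting the structured entries of $T$ from Def.~\ref{def_sym_asym} into the formulas of Prop.~\ref{Prop_LPLR}, which is precisely your computation. Your explicit justification of the reduction $LP=LP_i$, $LR=LR_i$ via permutation symmetry of the rows of $T$ and the fact that the aggregate metrics are convex combinations of the per-class ones is a point the paper leaves unstated, but it is a correct filling-in rather than a different route.
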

Given the noise ratio $\varepsilon$ of the original set $\mathcal{D}$ estimated by Eq.~(\ref{Eq_sym_noise}) or (\ref{Eq_asym_noise}), the above Cor.~\ref{Cor_LPLR} further enables us to estimate the metrics $LP$ and $LR$. Recall that the noise ratio of the selected subset $\mathcal{S}$ is $\varepsilon_S=1-LP$ according to the definition of $LP$. In practical situations ($\forall i$, $T_{ii}$ being the largest among $T_{ij}$, $j\in[c]$), \textbf{Alg.~\ref{Alg1} always produces a subset with smaller noise ratio} $\varepsilon_S < \varepsilon$. See Supp. D for more details.
\subsection{Improving the Co-teaching with the INCV method}
Although the subset selected by Alg.~\ref{Alg1} usually has much smaller noise ratio than the original set, the robust training of DNNs may require larger number of training samples. To address this issue, we present the Iterative Noisy Cross-Validation (INCV) method to increase the number of selected samples by applying Alg.~\ref{Alg1} iteratively. More details of the INCV can be found in Alg.~\ref{Alg2}. Apart from selecting clean samples, the INCV removes samples that have large categorical cross entropy loss at each iteration. The remove ratio $r$ determines how many samples will be removed.

After a detailed dissection of the noisy dataset $\mathcal{D}$ by Alg.~\ref{Alg2}, we can further improve the Co-teaching to take full advantage of the selected set $\mathcal{S}$ and the candidate set $\mathcal{C}$. Specifically, we let the two networks focus on the selected set $\mathcal{S}$ at the first $E_0$ epochs, then incorporate the candidate set $\mathcal{C}$. Hence, both training stability and test accuracy are improved. More details of our method can be found in Alg.~\ref{Alg3}.

\begin{algorithm}[t]
	\caption{Training DNNs robustly against noisy labels} 
	\label{Alg3}
	\textbf{INPUT:} the selected set $\mathcal{S}$, candidate set $\mathcal{C}$ and estimated noise ratio $\varepsilon$ from Alg.~\ref{Alg2}, warm-up epoch $E_0$, total epoch $E_{max}$
	\begin{algorithmic}[1]
		\STATE Initialize two networks $f_1(x;\omega_1)$ and $f_2(x;\omega_2)$
		\FOR {$e=1,\cdots,E_{max}$}
		\FOR {batches $(\mathcal{B}_{\mathcal{S}},\,\mathcal{B}_{\mathcal{C}})$ in $(\mathcal{S},\,\mathcal{C})$}
		\STATE \textbf{if} $t > E_0$ \textbf{then} $\mathcal{B}=\mathcal{B}_{\mathcal{S}}\cup\mathcal{B}_{\mathcal{C}}$, \textbf{else} $\mathcal{B}=\mathcal{B}_{\mathcal{S}}$
		\STATE $\mathcal{B}_1=\{\#n(e)\arg\min_{\mathcal{B}}\mathcal{L}(y,f_1(x;\omega_1))\}$
		\STATE $\mathcal{B}_2=\{\#n(e)\arg\min_{\mathcal{B}}\mathcal{L}(y,f_2(x;\omega_2))\}$
		\STATE Update $f_1$ using $\mathcal{B}_2$
		\STATE Update $f_2$ using $\mathcal{B}_1$
		\ENDFOR
		\ENDFOR
	\end{algorithmic}
	\textbf{OUTPUT:} $f_1(x;\omega_1)$, $f_2(x;\omega_2)$
\end{algorithm}

\section{Experiments}
\label{Sec_exp}
This section consists of three parts. Firstly, we experimentally verify the theoretical results presented in Sec.~\ref{Sec_method} $\&$ \ref{Sec_identify}. Then we demonstrate that the INCV method shown in Alg.~\ref{Alg2} can identify more samples that have correct labels. Finally, we show that our proposed method outlined in Alg.~\ref{Alg3} can train DNNs robustly against noisy labels, and outperforms state-of-the-art methods \cite{patrini2017making,malach2017decoupling,han2018co,jiang2018mentornet,ma2018dimensionality}. Our code is available at \url{https://github.com/chenpf1025/noisy_label_understanding_utilizing}.

\textbf{Experimental setup.} To verify our theory and test the algorithm, we first conduct experiments on synthetic noisy labels generated by randomly corrupting the original labels in CIFAR-10 \cite{krizhevsky2009learning}. We focus on two representative types of noise: symmetric noise and asymmetric noise, as defined in Def.~\ref{def_sym_asym} and illustrated in Fig~\ref{Fig_T}. To verify our method on real-world noisy labels, we use the WebVision dataset \cite{li2017webvision} which contains 2.4 million images crawled from websites using the 1,000 concepts in ImageNet ILSVRC12 \cite{deng2009imagenet}. The training set of WebVision contains many real-world noisy labels without human annotation. More implementation details are presented in Supp.~A. In the following subsections, we focus on experimental results and discussions.

\begin{figure}[t]	
	\begin{center}
		\centerline{\includegraphics[width=0.8\columnwidth]{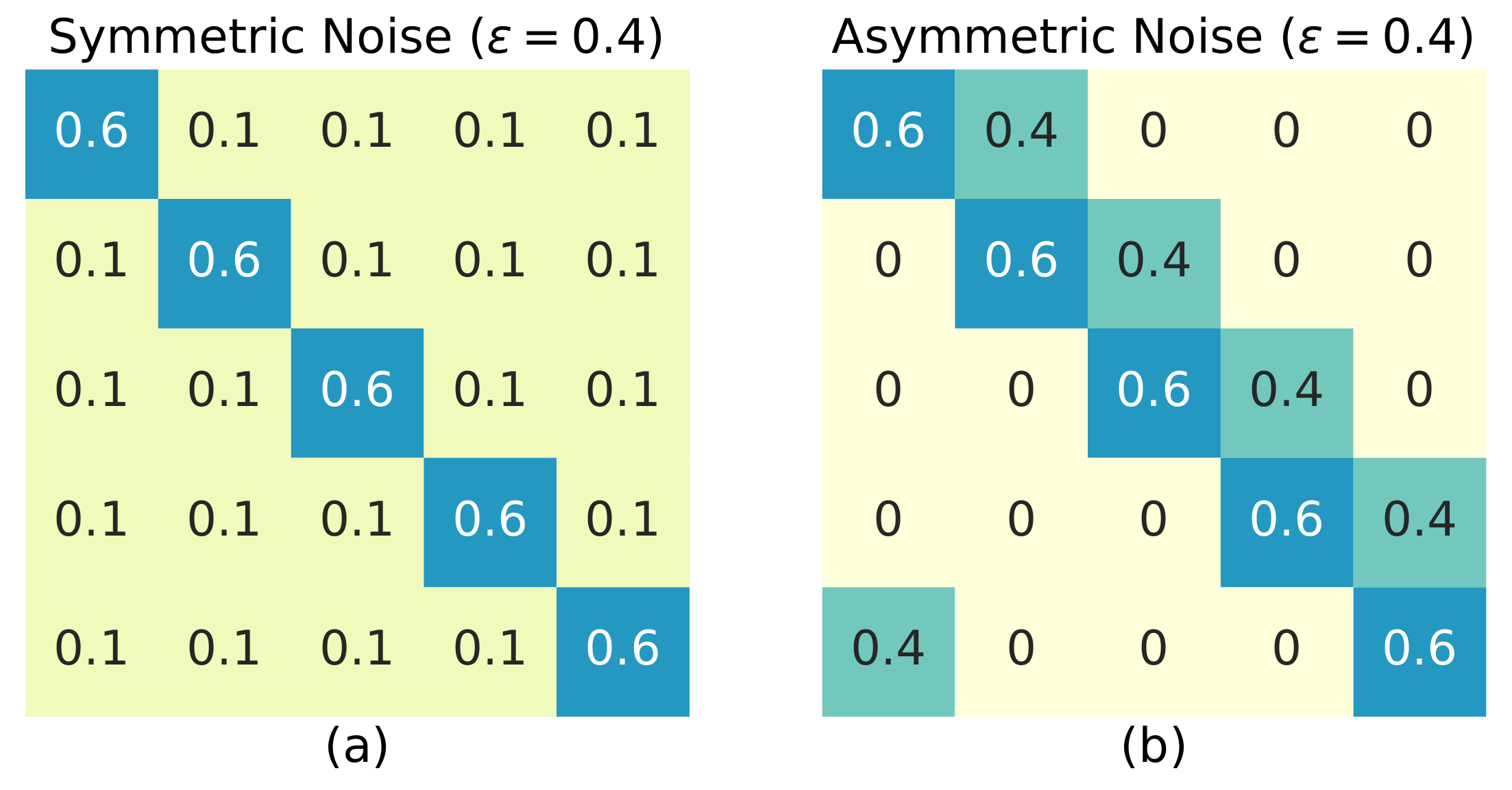}}
		\vskip -0.15in
		\caption{Examples of noise transition matrix $T$ (taking 5 classes and noise ratio $0.4$ as an example).}
		\label{Fig_T}
	\end{center}
	\vskip -0.3in
\end{figure}

\subsection{Behavior of DNNs trained with noisy labels}
\begin{figure*}[htp]	
	\vskip -0.1in
	\begin{center}
		\centerline{\includegraphics[width=1.6\columnwidth]{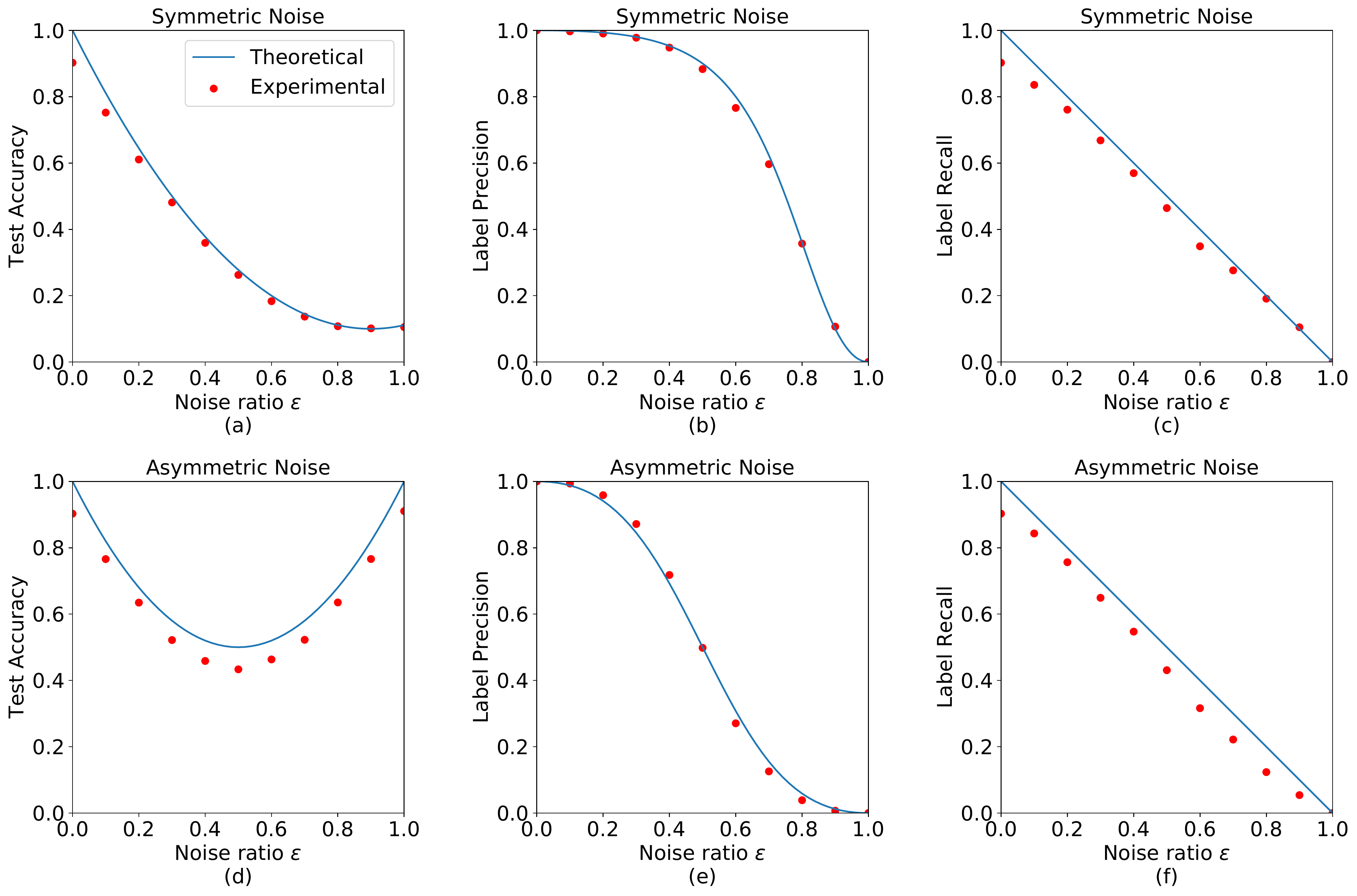}}
		\vskip -0.15in
		\caption{Test accuracy, label precision ($LP$) and label recall ($LR$) w.r.t noise ratio on manually corrupted CIFAR-10. The first row corresponds to symmetric noise and the second row asymmetric. Following cross-validation, we train the ResNet-110 on half of the noisy dataset and test on the rest half. The experimental results are consistent with the theoretical curves.}
		\label{Fig_LPLR}
	\end{center}
	\vskip -0.2in
\end{figure*}
\begin{figure}[t]
	\vskip -0.1in
	\begin{center}
		\centerline{\includegraphics[width=\columnwidth]{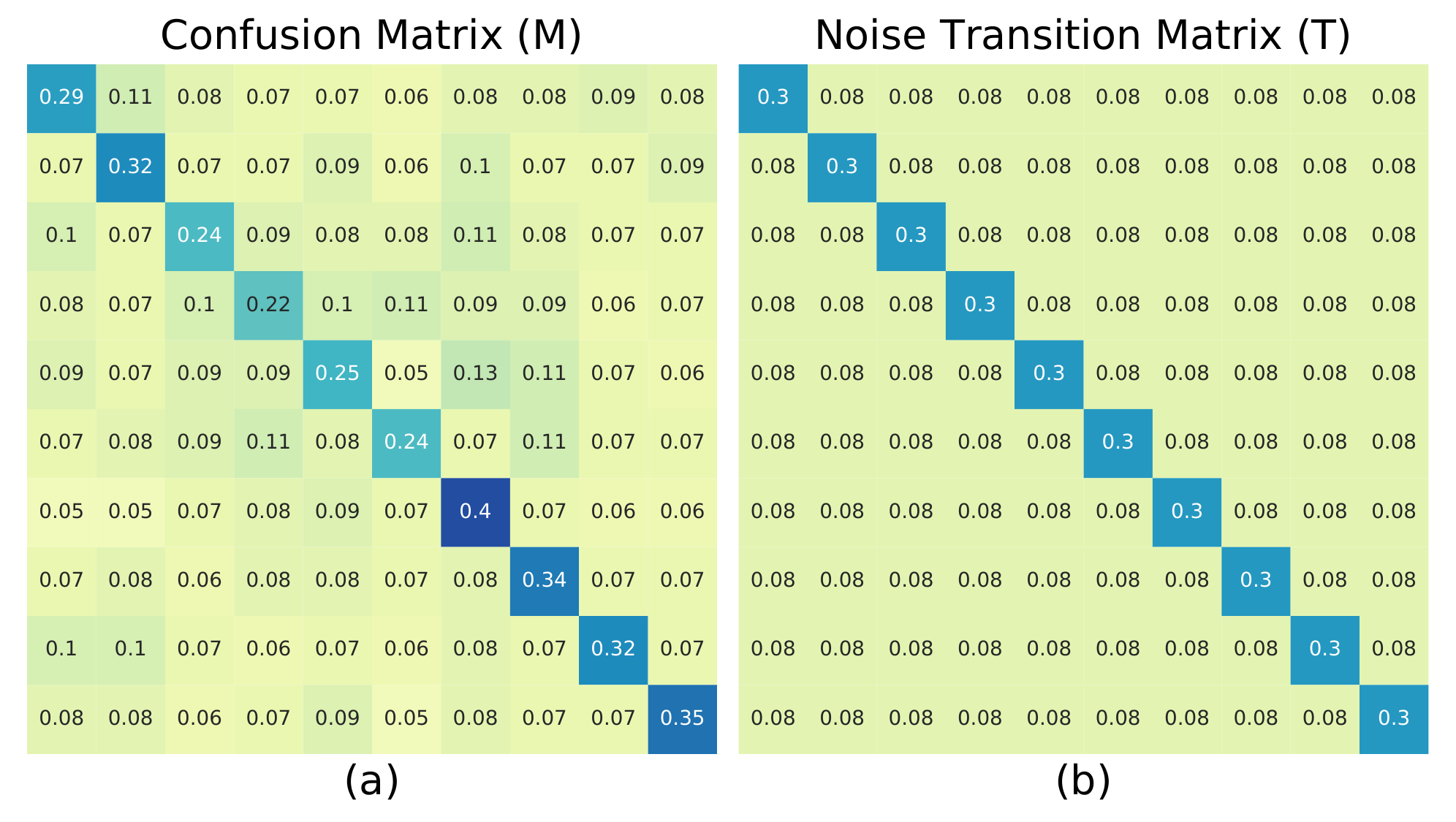}}
		\vskip -0.15in
		\caption{Confusion matrix of the RseNet-110 which is normally trained on manually corrupted CIFAR-10 with noise transition matrix $T$. $M\approx T$ satisfies the statement presented in Claim~\ref{claim1}.
		}
		\label{Fig_Confusion}
	\end{center}
	\vskip -0.4in
\end{figure}

\label{Sec_exp_verify}
For DNNs normally trained with noisy labels, we have theoretically characterized their behavior with the following metrics (i) test accuracy given in Eq.~(\ref{Eq_sym_noise}) $\&$ (\ref{Eq_asym_noise}), (ii) $LP$ given in Eq.~(\ref{EQ_LP}) $\&$ (\ref{EQ_LR}); (iii) $LR$ given in Eq.~(\ref{EQ_LP}) $\&$ (\ref{EQ_LR}). In this subsection, we evaluate these three metrics in extensive experiments, and show that experimental results confirm our theoretical analysis. Given a noisy dataset $\mathcal{D}$, we implement cross-validation to randomly split it into two halves $\mathcal{D}_1$, $\mathcal{D}_2$, then train the ResNet-110 \cite{he2016identity} on $\mathcal{D}_1$ and test on $\mathcal{D}_2$.

\textbf{Experimental results confirm the theoretical analysis.} As shown in Fig.~\ref{Fig_LPLR}, the experimental results are consistent with theoretical estimations. In particular, Fig.~\ref{Fig_LPLR} (a) reproduces the observation shown in \cite{zhang2016understanding} that the test accuracy is highly dependent of the noise ratio. \cite{zhang2016understanding} did not present any theoretical explanations while we explicitly formulate in Eq.~(\ref{Eq_sym_noise}) that the test accuracy is a quadratic function of the noise ratio. In Fig~\ref{Fig_LPLR} (b) and (e), the experimental $LP$ is precisely given by our formulas. 
It is observed that for some data points, the experimental test accuracy and $LR$ are slightly smaller than our theoretical values. This is reasonable since the distribution of $\mathcal{D}_2$ is not exactly the same as $\mathcal{D}_1$, and the generalization error would not become $0$ even without noise.

To further investigate the prediction behavior of DNNs trained with noisy labels, we define a confusion matrix $M$, whose $ij$-th entry represents the probability of predicting an $i$-th class test sample as $j$, s.t.,
\begin{equation}
\nonumber
M_{ij}:=P(y^{f}=j|\hat{y}=i).
\end{equation}
Fig.~\ref{Fig_Confusion} illustrates the confusion matrix of DNNs trained on manually corrupted CIFAR-10 with symmetric noise of ratio $0.7$, and we can find that $M\approx T$, which satisfies the statement presented in Claim~\ref{claim1}. More results can be found in Supp.~B, where we show $M\approx T$ still holds.

\textbf{Training accuracy converging to an extremely low value does not contradict our findings}. We find that under large symmetric noise, training accuracy of the model always converges to an extremely low value. In the experiments, when trained with symmetric noise of ratio $0.7$, $0.8$, $0.9$ and $1.0$, the training accuracies are only $0.58$, $0.40$, $0.24$ and $0.36$, respectively. However, we show in Fig.~\ref{Fig_LPLR} $\&$ \ref{Fig_Confusion} that our theoretical results are always consistent with the experimental ones. The phenomena further raises a fundamental question: \textit{Is a high training accuracy a necessary condition of learning and generalization?}
Without data augmentation, the theorem on finite sample expressiveness \cite{zhang2016understanding} indicates that DNNs can always achieve $0$ training error on the finite number of training samples. However, standard data augmentation \cite{he2016deep} is used in our implementation, which makes it difficult to achieve a high training accuracy, especially under large symmetric noise. Intuitively, due to the existence of noisy labels, nearby samples from the same class may have different labels, requiring many small regions to be classified differently. Augmentation easily generates random samples violating the classifier regions learned previously, hence increases the training error. Even in this case, our theoretical formulas presented previously still hold, as shown in Fig.~\ref{Fig_LPLR} $\&$ \ref{Fig_Confusion}. Here we conclude that \textit{as long as a sufficiently rich deep neural network is trained for sufficiently many steps till convergence, the network can fit the training set and generalize in distribution, even if there are noisy labels and the training accuracy is low.} We call for more theoretical explanations on this interesting phenomena in future.

\subsection{Identifying more clean samples by the INCV}
\label{Sec_exp_id}
Fig.~\ref{Fig_LPLR} (b) and (e) verifies that the subset selected by Alg.~\ref{Alg1} usually has much smaller noise ratio than the original set. Sometimes, training DNNs requires larger number of training samples. Here we demonstrate that Alg.~\ref{Alg2} (INCV) can identify more clean samples through iteration. For efficiency, we use the ResNet-32 and set $N=4$, $E=50$ without fine tuning. $\varepsilon$ is estimated automatically using Eq.~(\ref{Eq_sym_noise}) in all experiments.

\begin{figure}[t]	
	\begin{center}
		\centerline{\includegraphics[width=0.9\columnwidth]{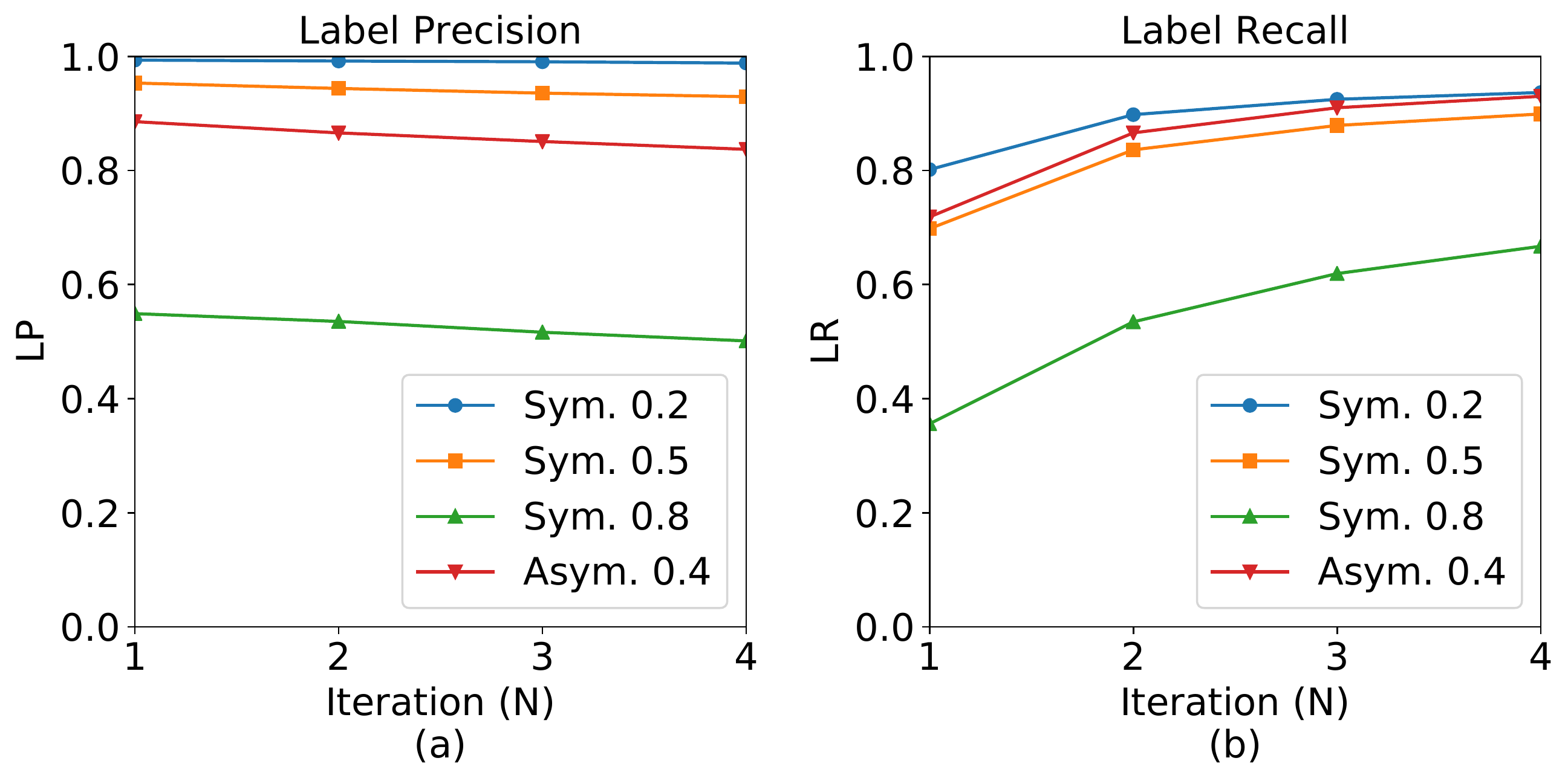}}
		\vskip -0.15in
		\caption{$LP$ and $LR$ of the INCV on the manually corrupted CIFAR-10.
			In each figure, the four curves correspond to symmetric noise of ratio $0.2$, $0.5$, $0.8$ and asymmetric noise of ratio $0.4$.
		}
		\label{Fig_Identify}
	\end{center}
	\vskip -0.4in
\end{figure}

\textbf{The INCV identifies most clean samples accurately.} Fig. ~\ref{Fig_Identify} illustrates the average $LP$ and $LR$ values of the Alg.~\ref{Alg2}, computed by repeating all experiments $5$ times. As show in the figure, the $LP$ and $LR$ are better than the theoretical lower bound even after a single iteration. Compared with ResNet-110 used in Sec.~\ref{Sec_exp_verify}, in this subsection we train the ResNet-32 for only $50$ epochs at each iteration. A much simpler model naturally releases the overfitting problem, yielding better $LP$ and $LR$. Besides, Fig.~\ref{Fig_Identify} also demonstrates that the $LR$ increases much with iteration, while the $LP$ slightly decreases. After four iterations, the INCV accurately identifies most clean samples. For example, under symmetric noise of ratio $0.5$, it selects about $90\%$ ($=LR$) of the clean samples, and the noise ratio of the selected set is reduced to around $10\%$ ($=1-LP$).

\textbf{Noisy labels exist even in the original CIFAR-10.} We also run the INCV on the original CIFAR-10 for just $1$ iteration and examine samples that are identified as corrupted ones. Interestingly, there are several confusing samples, as shown in Fig.~\ref{Fig_original}. This indicates that noisy labels exist even in the original CIFAR-10. Although corrupted samples contained in CIFAR-10 are so rare, which have negligible influence on training,  being capable of identifying them implies that the INCV is a powerful algorithm for cleaning noisy labels.

\subsection{Training DNNs robustly against noisy labels}
\label{Sec_exp_app}
As outlined in Alg.~\ref{Alg3}, we reformulate the Co-teaching to take full advantage of our INCV method. The followings clarify some questions that are useful for practical implementations of Alg.~\ref{Alg3}.
\begin{itemize}
	\item Q: \textit{How to set the size of mini-batches $\mathcal{B}_{\mathcal{C}}$ and $\mathcal{B}_{\mathcal{S}}$ drawn from $\mathcal{C}$ and $\mathcal{S}$?}\\
	A: In general, it is reasonable to draw mini-batches such that $\lvert\mathcal{B}_{\mathcal{C}}\rvert/\lvert\mathcal{B}_{\mathcal{S}}\rvert=\lvert\mathcal{C}\rvert/\lvert\mathcal{S}\rvert$. However, when $\mathcal{C}$ is large, it results in drawing too many samples from $\mathcal{C}$, which harms the training process since $\mathcal{C}$ usually contains many corrupted samples. Therefore, we adjust the strategy slightly by setting $\lvert\mathcal{B}_{\mathcal{C}}\rvert/\lvert\mathcal{B}_{\mathcal{S}}\rvert=\min(0.5,\lvert\mathcal{C}\rvert/\lvert\mathcal{S}\rvert)$. In the experiments, we set the batch size $\lvert\mathcal{B}_{\mathcal{S}}\rvert$ to $128$, then compute $\lvert\mathcal{B}_{\mathcal{C}}\rvert$ accordingly.
	
	\item Q: \textit{How many samples should we keep in each mini-batch?}\\
	A: In each mini-batch, we update the network using $\#n(e)$ samples that have small training loss, where $e$ is the current epoch. Following Co-teaching \cite{han2018co}, we set $n(e)=\lvert\mathcal{B}_{\mathcal{S}}\rvert(1-\varepsilon_{\mathcal{S}}\min(e/10,1))$, which means we decrease $n(e)$ from $\lvert\mathcal{B}_{\mathcal{S}}\rvert$ to $\lvert\mathcal{B}_{\mathcal{S}}\rvert(1-\varepsilon_{\mathcal{S}})$ linearly at the first $10$ epochs and fix it after that. Recall that $\varepsilon_{\mathcal{S}}=1-LP$ denotes the noise ratio of $\mathcal{S}$.
\end{itemize}

\begin{figure}[t]	
	\vskip -0.1in
	\begin{center}
		\centerline{\includegraphics[width=0.9\columnwidth]{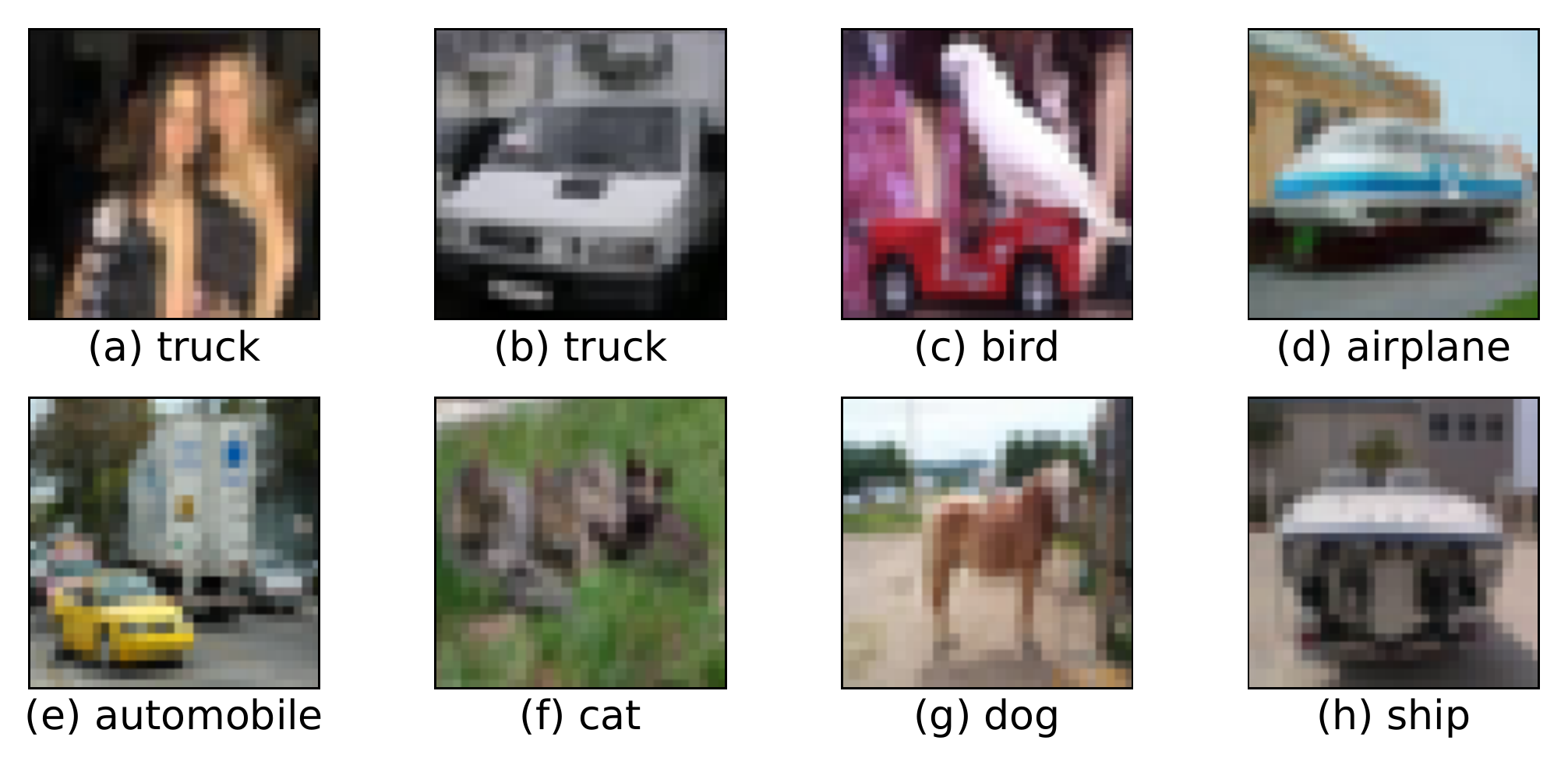}}
		\vskip -0.15in
		\caption{Noisy labels contained in the CIFAR-10 and identified by the INCV. Original Labels are annotated under images. (a) Human labeled as truck. (b) Labeled as truck, actually an automobile? (c) A bird on a toy car. (d) Labeled as airplane. (e) An automobile beside a truck. (f) Labeled as cat. (g) Labeled as dog, actually a horse? (h) Labeled as ship.}
		\label{Fig_original}
	\end{center}
	\vskip -0.4in
\end{figure}

\textbf{Comparable methods.} We compare Alg.~\ref{Alg3} with the following baselines ($1$) \textit{F-correction} \cite{patrini2017making}. It first trains a network to estimate $T$, then corrects the loss function accordingly. ($2$) \textit{Decoupling} \cite{malach2017decoupling}. It trains two networks on samples for which the predictions from the two networks are different. ($3$) \textit{Co-teaching} \cite{han2018co}. It maintains two networks. Each network selects samples of small training loss from the mini-batches and feeds them to the other network. ($4$) \textit{MentorNet} \cite{jiang2018mentornet}. A teacher network is pre-trained, which provides a sample weighting scheme to train the student network. ($5$) \textit{D2L} \cite{ma2018dimensionality}. For each sample, it linearly combines the original label and the prediction of network as the new label. The combining weight depends on the dimensionality of the latent feature subspace \cite{amsaleg2017vulnerability}.

\textbf{Experiments on manually corrupted CIFAR-10.} We first evaluate all methods on the CIFAR-10 by manually corrupting the labels with different types of noise. For symmetric noise, we test noise ratio $0.2$, $0.5$ and $0.8$. For asymmetric noise, we choose a non-trivial and challenging noise ratio $0.4$, since asymmetric noise larger than $0.5$ is trivial. Still, we use the ResNet-32 and repeat all experiments five times. As shown in Table~\ref{Tab_cifar}, our method always achieves the best test accuracy (marked in boldface) under all cases. Even for symmetric noise of ratio $0.8$ which is challenging for most methods, we achieve a good test accuracy. Fig.~\ref{Fig_Test_Acc} illustrates the test accuracy of all methods on the clean test set after every training epoch. It can be found that our method impressively achieves the best test accuracy in all settings, while some baseline methods suffer from overfitting at the later stage of training, such as F-correction, Decoupling and MentorNet shown in Fig~\ref{Fig_Test_Acc} (b) $\&$ (d), and D2L shown in all four sub-figures. In particular, compared with the Co-teaching \cite{han2018co}, our method further enjoys a more stable training process and obtains better test accuracy by training on a clean subset firstly. 

\begin{table}[t]
	\vskip -0.1in
	\centering
	\caption{Average test accuracy ($\%$, 5 runs) with standard deviation under different noise types and noise ratios. We train the RseNet-32 on manually corrupted CIFAR-10 and test on the clean test set. The best result is marked in bold face.\\}
	\begin{tabular}{|c|c|c|c|c|}
		\hline
		\multirow{2}{*}{Method} & \multicolumn{3}{c|}{Sym.} & Asym.       \\ \cline{2-5} 
		& $0.2$   & $0.5$   & $0.8$   & $0.4$                             \\ \hline
		
		\multirow{2}{*}{F-correction}
		& $85.08$ & $76.02$ & $34.76$ & $83.55$                           \\ 
		& \multicolumn{1}{l|}{$\pm0.43$} & \multicolumn{1}{l|}{$\pm0.19$}
		& \multicolumn{1}{l|}{$\pm4.53$} & \multicolumn{1}{l|}{$\pm2.15$} \\ \hline
		
		\multirow{2}{*}{Decoupling}
		& $86.72$ & $79.31$ & $36.90$ & $75.27$                           \\ 
		& \multicolumn{1}{l|}{$\pm0.32$} & \multicolumn{1}{l|}{$\pm0.62$}
		& \multicolumn{1}{l|}{$\pm4.61$} & \multicolumn{1}{l|}{$\pm0.83$} \\ \hline
		
		\multirow{2}{*}{Co-teaching}
		& $89.05$ & $82.12$ & $16.21$ & $84.55$                           \\ 
		& \multicolumn{1}{l|}{$\pm0.32$} & \multicolumn{1}{l|}{$\pm0.59$}
		& \multicolumn{1}{l|}{$\pm3.02$} & \multicolumn{1}{l|}{$\pm2.81$} \\ \hline
		
		\multirow{2}{*}{MentorNet}
		& $88.36$ & $77.10$ & $28.89$ & $77.33$                           \\ 
		& \multicolumn{1}{l|}{$\pm0.46$} & \multicolumn{1}{l|}{$\pm0.44$}
		& \multicolumn{1}{l|}{$\pm2.29$} & \multicolumn{1}{l|}{$\pm0.79$} \\ \hline
		
		\multirow{2}{*}{D2L}
		& $86.12$ & $67.39$ & $10.02$ & $85.57$                           \\ 
		& \multicolumn{1}{l|}{$\pm0.43$} & \multicolumn{1}{l|}{$\pm13.62$}
		& \multicolumn{1}{l|}{$\pm0.04$} & \multicolumn{1}{l|}{$\pm1.21$} \\ \hline
		
		\multirow{2}{*}{Ours}
		& $\textbf{89.71}$ & $\textbf{84.78}$ & $\textbf{52.27}$ & $\textbf{86.04}$ \\ 
		& \multicolumn{1}{l|}{$\pm0.18$} & \multicolumn{1}{l|}{$\pm0.33$}
		& \multicolumn{1}{l|}{$\pm3.50$} & \multicolumn{1}{l|}{$\pm0.54$} \\ \hline		
	\end{tabular}
	\label{Tab_cifar}
	\vskip -0.05in
\end{table}
\begin{figure}[t]	
	\begin{center}
		\centerline{\includegraphics[width=0.95\columnwidth]{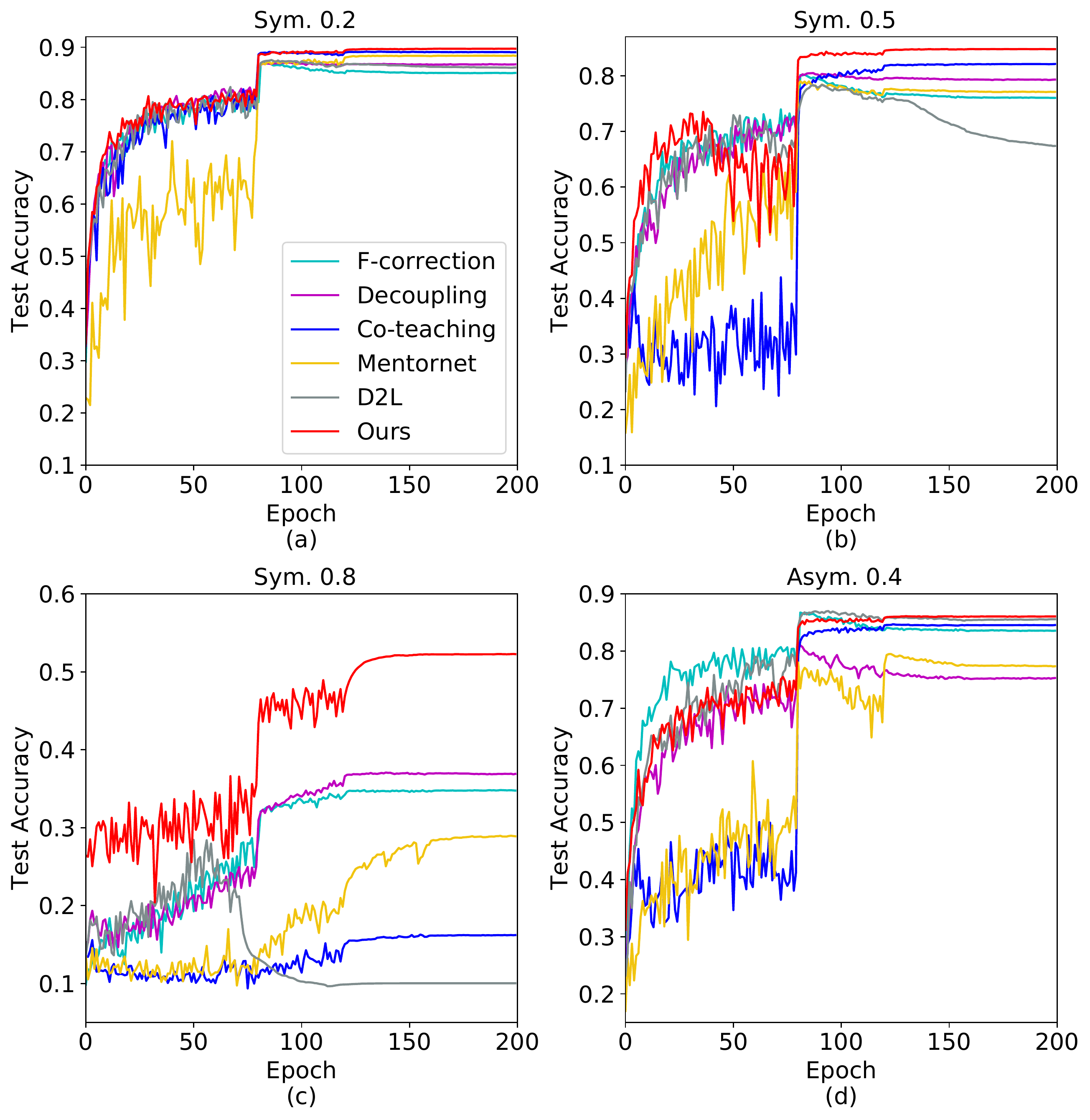}}
		\vskip -0.15in
		\caption{Average test accuracy (5 runs) 
			during training under different noise types and noise ratios. We train the RseNet-32 on manually corrupted CIFAR-10 and test on the clean test set. The sharp change of accuracy results from the learning rate change.}
		\label{Fig_Test_Acc}
	\end{center}
	\vskip -0.4in
\end{figure}

\begin{table}[t]
	\centering
	\caption{Validation accuracy ($\%$) on the WebVision validation set and ImageNet ILSVRC12 validation set. The number outside (inside) the parentheses
		denotes Top-1 (Top-5) classification accuracy. We train the inception-resnet v2 on the first 50 classes of the WebVision training set, which contains real-world noisy labels. The best result is marked in bold face.\\}
	\begin{tabular}{|c|c|c|}
		\hline
		Method       & WebVision Val. & ILSVRC2012 Val. \\ \hline
		F-correction & $61.12\,(82.68)$            & $57.36\,(82.36)$             \\
		Decoupling   & $62.54\,(84.74)$            & $58.26\,(82.26)$             \\
		Co-teaching  & $63.58\,(85.20)$            & $61.48\,(84.70)$             \\
		MentorNet    & $63.00\,(81.40)$            & $57.80\,(79.92)$             \\
		D2L          & $62.68\,(84.00)$            & $57.80\,(81.36)$             \\
		Ours         & $\textbf{65.24\,(85.34)}$            & $\textbf{61.60\,(84.98)}$             \\ \hline
	\end{tabular}
	\label{Tab_web}
	\vskip -0.1in
\end{table}

\textbf{Experiments on real-world noisy labels.} To verify the practical usage of our method on real-world noisy labels, we use the WebVision dataset $1.0$ \cite{li2017webvision}, whose training set contains many real-world noisy labels. Since the dataset is quite large, for quick experiments, we compare all methods on the first 50 classes of the Google image subset using the inception-resnet v2 \cite{szegedy2017inception}. We test the trained model on the human-annotated WebVision validation set and the ILSVRC12 validation set. As shown in table~\ref{Tab_web}, our method consistently outperforms other state-of-the-art ones in terms of test accuracy. Moreover, Supp.~C, contains some noisy examples identified automatically from the WebVision dataset by our INCV method (Alg.~\ref{Alg2}), which implies the INCV is reliable on datasets containing real-world noisy labels.

\section{Conclusion}
In this work, we initiate a formal study of noisy labels. We first formulate several findings towards the generalization of DNNs trained with noisy labels. Theoretical analysis and extensive experiments are presented to justify our statements. Based on our findings, we then propose the INCV method, which randomly divides noisy datasets, then utilizes cross-validation to identify clean samples. We provide theoretical guarantees for the INCV, and then demonstrate through experiments that it is capable of identifying most clean samples accurately. Finally, we adopt the Co-teaching strategy which takes full advantage of the identified samples to train DNNs robustly against noisy labels. By comparing with extensive baselines, we show that our method achieves state-of-the-art test accuracy on the clean test set. In future, our formulations on the generalization performance of DNNs trained with noisy labels may promote more fundamental approaches of dealing with label corruption.

\clearpage
\bibliography{example_paper}
\bibliographystyle{icml2019}

\twocolumn[
\icmltitle{Supplementary Materials: \\Understanding and Utilizing Deep Neural Networks\\ Trained with Noisy Labels}

\icmlkeywords{Machine Learning, ICML}

\vskip 0.3in
]

\begin{appendix}
	
	\begin{figure*}[t]	
		\begin{center}
			\centerline{\includegraphics[width=2\columnwidth]{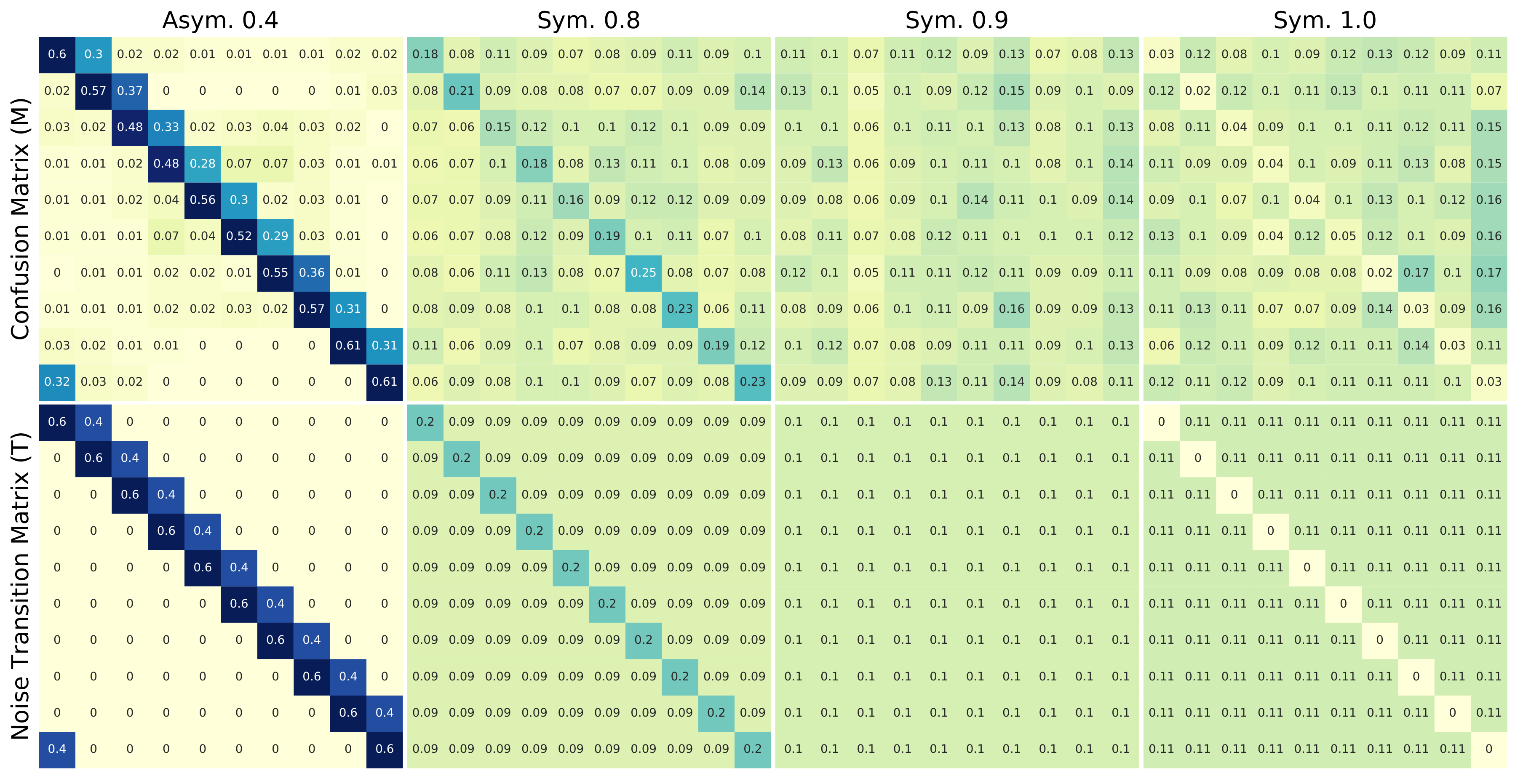}}
			\vskip -0.15in
			\caption{Confusion matrix (the first row) of ResNet-110 normally trained on corrupted CIFAR-10 with noise transition matrix $T$ (the second row). We specifically examine the noise settings with low training accuracy. $M\approx T$ satisfies the statement presented in Claim~1.}
			\label{Fig_Confusion2}
		\end{center}
		\vskip -0.2in
	\end{figure*}
	\begin{figure}	
		\begin{center}
			\centerline{\includegraphics[width=\columnwidth]{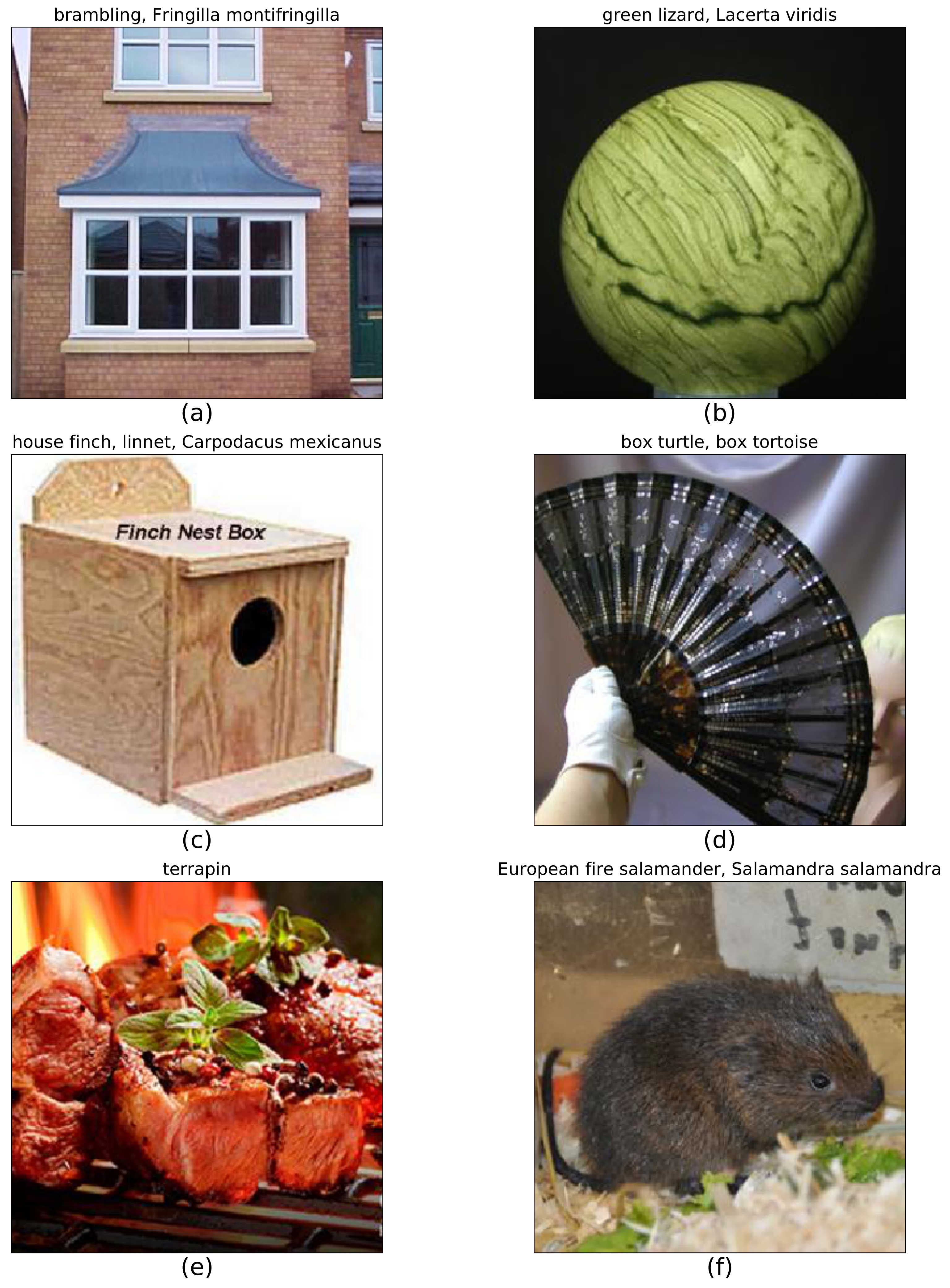}}
			\vskip -0.15in
			\caption{Examples of automatically identified noisy labels in the WebVision dataset using the INCV. We annotate the labeled concepts on top of each image. The labels are obviously unreasonable.}
			\label{Fig_web_noisy}
		\end{center}
		\vskip -0.2in
	\end{figure}
	\section{Further details on experiments}
	\label{Sup_exp}
	\subsection{CIFAR-10}
	CIFAR-10 \cite{krizhevsky2009learning} contains human-annotated labels which can be treated as true labels. To conduct experiments on the synthetic noisy labels, we randomly corrupt the labels according to a noise transition matrix $T$. 
	
	In all experiments, we set the batch size to $128$, and implement (i) $l_2$ weight decay of $10^{-4}$ and (ii) data augmentation of horizontal random flipping and $32\times32$ random cropping after padding 4 pixels around images. In Sec.~5.1, we aim to verify our theory by demonstrating the worst case, so we use the ResNet-110 \cite{he2016identity} to ensure the model has the sufficiently high capacity to memorize all corrupted samples. While in Sec.~5.2 $~\&$ 5.3, we use the ResNet-32 \cite{he2016deep} for the consideration of training efficiency.
	
	In Sec.~5.2, we apply the Iterative Noisy Cross-Validation (INCV, Alg.~2) to select clean samples. For efficiency, we set the number of iterations to $4$, and train the ResNet32 for $50$ epochs at each iteration. We use the Adam optimizer with an initial learning rate $10^{-3}$, which is divided by $2$ after $20$ and $30$ epochs, and finally takes the value $10^{-4}$ after $40$ epochs. In all other experiments, we train the networks for $200$ epochs till convergence, using the Adam optimizer \cite{kinga2015method} with an initial learning rate $10^{-3}$, which is divided by $10$ after $80$, $120$ and $160$ epochs, and further divided by $2$ after $180$ epochs.
	
	After selecting clean samples, we train DNNs robustly using Alg.~3. We set the warm-up epochs $E_0$ to $40$ or $80$ (i.e., $20\%$ or $40\%$ of the total number of training epochs) without fine tuning. If the size of the candidate set $\mathcal{C}$ is large, considering it has much more noisy labels than the selected relatively clean set $\mathcal{S}$, we set $E_0=80$ so that the network will focus on $\mathcal{S}$ until $80$ epochs. Otherwise, we take $E_0=40$. In the INCV, we denote the proportion between the number of removed samples and selected samples as remove ratio $r$, which determines how many samples will be removed. We found that our algorithm is robust to $r$, which means slightly changing it does not affect the performance much. If we do not want to remove any samples, we set $r=0$, otherwise we set $r=\frac{\varepsilon}{1-\varepsilon}$ without fine tuning, where $\varepsilon$ is the estimated noise ratio of the original training set given by Alg.~2, hence $\frac{\varepsilon}{1-\varepsilon}$ is the proportion between the number of corrupted samples and the number of clean samples in the original training set.
	
	For those baseline methods, there are many specific hyperparameters, and we set the value according to their original papers. We train the same ResNet-32 for $200$ epochs using the Adam optimizer with the same learning rate scheduler.
	
	\subsection{WebVision}
	To verify the practical usage of our method on real-world noisy labels, we use the WebVision dataset $1.0$ \cite{li2017webvision} which contains 2.4 million images crawled from the websites using the 1,000 concepts in ImageNet ILSVRC12 \cite{deng2009imagenet}. The training set of the WebVision contains many real-world noisy labels. Since the dataset is quite large, for quick experiments, we use the first 50 classes of the Google image subset. We test the trained DNNs on the human-annotated WebVision validation set and the ILSVRC12 validation set.
	
	We use the inception-resnet v2 \cite{szegedy2017inception}. Following the standard training pipeline \cite{li2017webvision}, we first resize each image to make shorter size as $256$. Then we implement standard data augmentation: randomly crop a patch of size $227\times227$ form each image, and horizontal random flipping is applied before feeding the patch to the network for training. The batch size is set to $128$ for all experiments. We train the networks for $120$ epochs using the SGD optimizer with an initial learning rate $0.1$, which is divided by $10$ after $40$, and $80$ epochs.
	
	In our method, we first run the INCV to select clean samples. In the INCV, we set the number of iterations to $2$, and train the model for simply $50$ epochs at each iteration. We use the SGD optimizer with an initial learning rate $0.1$, which is divided by $2$ after $20$ and $30$ epochs, and finally takes the value $0.01$ after $40$ epochs. We set the remove ratio $r$ to $0.1$. After selecting clean samples, we train a model robustly using Alg.~3, where we set the warm-up epoch as $E_0=20$.
	
	\section{More plots of the confusion matrix}
	\label{Sup_Con}
	We have shown in the main paper that when a network is trained with noisy labels, its confusion matrix $M$ on the test set equals to the noise transition matrix $T$. This directly verifies our statement presented in Claim~1, which implies that the DNNs are able to fit the noisy training set exactly and generalize in distribution. Due to lack of space, in the main paper, we simply show results for symmetric noise of ratio $0.7$. Here in Fig.~\ref{Fig_Confusion2}, we show that $M\approx T$ holds for different noise types and noise ratios. We present the results for asymmetric noise of ratio $0.4$, and then specifically investigate the noise settings which result in a low training accuracy, i.e., symmetric noise of ratio $0.8$, $0.9$ and $1.0$ where the training accuracies are $0.40$, $0.24$ and $0.36$. In this way, we also verify that the training accuracy converging to a extremely low value does not contradict our formulations on the generalization performance of DNNs trained with noisy labels.
	
	\section{The INCV automatically identifies many noisy labels in the WebVision dataset}
	\label{Sup_noisy}
	In Sec.~5.3, we have demonstrated that on the WebVision dataset, compared with state-of-the-art methods, our training strategy is capable of training a model that achieves the best generalization performance on the clean validation set. In the experiments, we firstly select most clean samples out of the original training set use the Iterative Noisy Cross-Validation (INCV, Alg.~2). The INCV also identifies samples that are very likely to have a wrong label. In this Section, we demonstrate that the INCV does identify many noisy labels in the WebVision, as shown in Fig.~\ref{Fig_web_noisy}. Since the images have different size with shorter size as $256$, we crop each image from the center to form a square image. We first convert the observed label of each example to the correspond concept in the synsets, then annotate the concept on top of each image. In the WebVision, the $6$ images are labeled as (a) brambling, Fringilla montifringilla; (b) green lizard, Lacerta viridis; (c) house finch, linnet, Carpodacus mexicanus; (d) box turtle, box tortoise; (e) terrapin; (f) European fire salamander, Salamandra salamandra; which are obviously unreasonable.
	
	\section{More discussions on Corollary 2.2}
	Without loss of generality, we assume $\forall i$, $T_{ii}$ being the largest among $T_{ij}$, $j\in[c]:=\{1,\cdots,c\}$. Based on Corollary 2.2 presented in the main paper, we can prove that under the cases of symmetric and asymmetric noise, Alg.~1 always selects a subset with smaller noise ratio than the original dataset, i.e., $\varepsilon_S < \varepsilon$, where $\varepsilon$ is the noise ratio of the original dataset $\mathcal{D}$, and $\varepsilon_S$ is the noise ratio of the selected set $\mathcal{S}$. Recall that $\varepsilon_S=1-LP$ according to the definition of $LP$.
	
	For the symmetric noise, we have the definition $\forall i\in[c]$, $T_{ii}=1-\varepsilon$, and $T_{ij}=\varepsilon/(c-1), \forall j\neq i$. In this case, $T_{ii}$ being the largest number among $T_{ij}$ implies $\varepsilon/(c-1)<1-\varepsilon$. Using Eq.~(10) in Corollary 2.2, we have
	\begin{equation}
	\begin{aligned}
	\nonumber
	1-\varepsilon_S=LP&=\frac{(1-\varepsilon)^{2}}{(1-\varepsilon)^{2}+\varepsilon^2/(c-1)}\\
	&>\frac{(1-\varepsilon)^{2}}{(1-\varepsilon)^{2}+\varepsilon(1-\varepsilon)}\\
	&=1-\varepsilon.		
	\end{aligned}
	\end{equation}
	For the asymmetric noise, we have the definition $\forall i\in[c]$, $T_{ii}=1-\varepsilon$, $T_{ij}=\varepsilon$ for some $j\neq i$, and $T_{ij}=0$ otherwise. In this case, $T_{ii}$ being the largest number among $T_{ij}$ implies $\varepsilon<1-\varepsilon$. Using Eq.~(11) in Corollary 2.2, we have
	\begin{equation}
	\begin{aligned}
	\nonumber
	1-\varepsilon_S=LP&=\frac{(1-\varepsilon)^{2}}{(1-\varepsilon)^{2}+\varepsilon^2}\\
	&>\frac{(1-\varepsilon)^{2}}{(1-\varepsilon)^{2}+\varepsilon(1-\varepsilon)}\\
	&=1-\varepsilon.		
	\end{aligned}
	\end{equation}
	Thus, we can conclude that $\varepsilon_S<\varepsilon$.
\end{appendix}

\end{document}